\title{Mixability made efficient: \\
Fast online multiclass logistic regression}
\author{%
  Rémi Jézéquel \\
  INRIA - Département d’Informatique de l’École\\ Normale Supérieure, PSL Research University \\ Paris, France \\
  \texttt{remi.jezequel@inria.fr} \\
  \And
 Pierre Gaillard \\
 Univ. Grenoble Alpes, Inria,\\
 CNRS, Grenoble INP, LJK  \\ Grenoble, France \\
 \texttt{pierre.gaillard@inria.fr}
  \AND
  Alessandro Rudi \\
  INRIA - Département d’Informatique de l’École \\ Normale Supérieure, PSL Research University \\ Paris, France \\
  \texttt{alessandro.rudi@inria.fr} \\
}
\date{}
\newtheorem{theorem}{Theorem}
\newtheorem{proposition}[theorem]{Proposition}
\newtheorem{lemma}[theorem]{Lemma}
\newtheorem{corollary}[theorem]{Corollary}
\theoremstyle{remark}
\newcommand{\namealgo}{GAF}
\newcommand{\Normal}{\mathcal{N}}
\newcommand{\cX}{\mathcal{X}}
\newcommand{\E}{\mathbb{E}}
\newcommand{\R}{\mathbb R}
\newcommand{\ie}{\textit{i.e.,}}
\newcommand{\argmin}[1]{\mathop{\operatorname{argmin}}_{#1}}
\newcommand{\cY}{\mathcal{Y}}
\newcommand{\Tr}{\operatorname{Tr}}
\newcommand{\one}[1]{\boldsymbol{1}[#1]}
\renewcommand{\hat}{\widehat}
\renewcommand{\epsilon}{\varepsilon}
\def\namedlabel#1#2{\begingroup
    #2%
    \def\@currentlabel{#2}%
    \phantomsection\label{#1}\endgroup
}
\begin{document}

\maketitle

\begin{abstract}%
Mixability has been shown to be a powerful tool to obtain algorithms with optimal regret. However, the resulting methods often suffer from high computational complexity which has reduced their practical applicability. For example, in the case of multiclass logistic regression, the aggregating forecaster (\cite{foster2018logistic}) achieves a regret of $O(\log(Bn))$ whereas Online Newton Step achieves $O(e^B\log(n))$ obtaining a double exponential gain in $B$ (a bound on the norm of comparative functions). However, this high statistical performance is at the price of a prohibitive computational complexity $O(n^{37})$.

In this paper, we use quadratic surrogates to make aggregating forecasters more efficient. We show that the resulting algorithm has still high statistical performance for a large class of losses. In particular, we derive an algorithm for multi-class logistic regression with a regret bounded by $O(B\log(n))$ and a computational complexity of only $O(n^4)$.

\todo{Show precise somewhere that our results are new in the iid setting as well.}

\end{abstract}

\listoftodos

\section{Introduction}

In online learning, a learner sequentially interacts with an environment and tries to learn based on data observed on the fly (\cite{cesa2006prediction}, \cite{hazan2016introduction}). More formally, at each iteration $t \ge 1$, the learner receives an input $x_t$ in some space $\mathcal{X}$; makes a prediction $\hat y_t$ in a decision domain $\smash{\hat \cY}$ and the environment reveals the output $y_t \in \mathcal{Y}$. The inputs $x_t$ and the outputs $y_t$ are sequentially chosen by the environment and can be arbitrary. No stochastic assumption (except boundedness) on the data sequence $(x_t,y_t)_{1\leq t\leq n}$ is made.    The accuracy of a prediction  $\smash{\hat y_t \in \hat \cY}$ at instant $t\geq 1$ for the outcome $y_t \in \cY$ is measured through a loss function $\smash{\ell:\hat \cY \times \cY \to \R}$. The learner aims at minimizing his cumulative regret
\begin{equation}
    R_n(f) = \sum\limits_{t=1}^n \ell \big(\hat y_t, y_t\big) - \sum\limits_{t=1}^n \ell\big(f(x_t),y_t\big) \,,
    \label{eq:regret_def}
\end{equation}
with respect to any function $f \in \mathcal{F}$ in a reference class.
All along this paper, we will consider parametric class of functions $f_\theta$ indexed by $\theta \in \Theta \subset \R^d$. We will also assume convexity of the loss according to index $\theta$, which we denote $\ell_{x,y}: \theta \mapsto \ell(f_\theta(x),y)$ for any $(x,y) \in \cX \times \cY$. In this general context, many algorithms have been designed based on different assumptions and obtaining different trade-offs for regret. We review below the most relevant ones for our purpose.

Assuming only convexity of the loss $\ell$, a well-known strategy for the learner is Online Gradient Descent (\cite{zinkevich2003online}) with an optimal regret of order $O(\sqrt{n})$. However, if the loss is $\alpha$-mixable, the learner may achieve the faster rate $O(\frac{1}{\alpha}\log(n))$ by using an aggregating forecaster (see \cite{vovk2001competitive}, \cite{van2015fast}). Yet, such an algorithm is not constructive in general and when it is, the computational complexity is often very high (a notable exception is the least-squares setting). To reduce it, a stronger assumption has been introduced by \cite{hazan2007logarithmic} with $\eta$-exp concavity. Under the latter hypothesis, Online Newton Step (ONS) has a regret bounded by $\smash{O(\frac{1}{\eta} \log(n))}$ and a computational complexity of $O(n)$. However this efficiency comes at the price of deteriorating statistical performance. Indeed, $\eta$-exp concavity implies $\alpha$-mixability for $\eta \ge \alpha$ and in some cases, the gap between $\eta$ and $\alpha$ can be very large.

The most spectacular case of this phenomenon occurs for logistic regression.  In this setting, the loss is defined as 
\[ \ell\left(\hat y_t, y_t\right) = -\log \left(\sigma(\hat y_t)_{y_t} \right) \qquad \text{where} \qquad \sigma(z)_i = \frac{e^{z_i}}{\sum_j e^{z_j}}
\]
and the regret is computed with respect to linear functions $\mathcal{F} = \{x \mapsto W x, W \in \Theta\}$, where $\Theta = \mathcal{B}(\R^{K\times d},B)$ is the Frobenius bounded ball of radius $B > 0$. On this subset, the logistic loss is  $\eta$-exp concave \textit{only} for $\eta \le e^{-B}$. The regret of Online Newton Step can thus be of order $O(e^B\log(n))$. On the other hand, as remarked by \cite{kakade2008online} (binary case) and \cite{foster2018logistic}, the logistic loss is 1-mixable and an aggregating algorithm may achieve $O(\log(nB))$. Nevertheless, the algorithm relies on Monte Carlo methods and must sample from log-concave probability distributions which is extremely computationally expensive (see Table~\ref{tab:rates}). Therefore, in this framework, the exp-concavity assumption leads to an efficient algorithm with low dependence on $B$ while mixability yields an inefficient algorithm with much better statistical performance.

\begin{table}[!t]
    \centering
    \begin{tabular}{ccccc}
        \toprule
        Algorithm & OGD & ONS & \cite{foster2018logistic} &  {\bfseries \namealgo} (Ours) \\
        \midrule
        Regret  & $B\sqrt{n}$ & $dKe^B\log(n)$ & $dK\log(Bn)$ & $dK(B^2 + B\log(n))$\\
        Total complexity & $ndK$ & $nd^3K^3$ & $B^6 n^{25}(Bn+dK)^{12}$ & $nK^3d^2 + K^2n^4$ \\ 
    \bottomrule \\
    \end{tabular}
    
    \caption{Regret bounds and computational complexities (in $O(\cdot)$) of relevant algorithms for logistic regression with $K$ classes.}
    \label{tab:rates}
\end{table}

Between these two extremes, other algorithms and trade-offs have been analyzed. First, it has been shown that in some situations, Follow The Regularized Leader (FTRL) can achieve a fast rate without an exponential constant (\cite{bach2010self}, \cite{marteau2019beyond}, \cite{ostrovskii2021finite}). However, several additional assumptions are necessary to achieve these rates. Unfortunately, they are essentially unavoidable. Indeed, \cite{hazan2014logistic} showed a polynomial lower bound for the proper algorithm (\ie{} with linear prediction function) in the regime $B = \log(n)$. This prevents algorithms like ONS and Follow The Regularized Leader from reaching logarithmic regret without an exponential constant in $B$. This result motivates the search for improper algorithms with better computational complexity than \cite{foster2018logistic}.

For binary logistic regression, two efficient improper algorithms have been proposed in the literature. First, in the i.i.d. framework, \cite{mourtada2019improper} have proposed the Sample Minmax Predictor (SMP). It achieves an excess risk of order $O((B^2 + d)/n)$ with computational complexity equivalent to Follow The Regularized leader. In the online framework, \cite{jezequel2020efficient} proposed AIOLI, which is based on quadratic approximations of the logistic loss as well as virtual labels to regularize. The regret is upper-bounded $O(dB\log(n))$ and the computational complexity is $O(n(d^2 + \log(n)))$.

These previous works left open the question of achieving the same type of performance in a setting other than binary logistic regression. In particular, for multi-class logistic regression, no other algorithm than \cite{foster2018logistic} is known to achieve logarithmic regret without an exponential constant. 

\textbf{Contributions} We introduce in Section~\ref{sec:algo} a new generic online learning algorithm, that we call Gaussian Aggregating Forecaster (\namealgo) . \namealgo{} achieves logarithmic regret with a  small multiplicative constant for a large class of convex loss functions (see Theorem~\ref{thm:main_thm}). The latter includes several popular loss functions such as squared loss, binary and multi-class logistic loss. Our assumptions on the loss functions are slightly stronger than $\alpha$-mixability but generally weaker than assumptions widely used in the statistical framework such as generalized self-concordance (see~\cite{bach2010self}). 

In the particular but significant setting of multi-class logistic regression, \namealgo{} has a regret bounded by $O(dKB^2 + dKB\log(n))$ and a total computational complexity of $O(nK^3d^2 + K^2n^4)$, thus significantly improving on the $O(B^6n^{25}(Bn+dK)^{12})$ complexity of \cite{foster2018logistic}, which was the best known to date for algorithms without exponential dependence on $B$. Table~\ref{tab:rates} summarizes the regrets and computational cost obtained by the relevant algorithms in this framework. 
It is worth pointing out that, by using standard online-to-batch conversion~\cite{helmbold1995weak}, this paper also provides new results on the excess risk in the statistical i.i.d. Indeed, even in this extensively used framework, the only existing improper algorithms without exponential constant in $B$ so far are \namealgo{} and \cite{foster2018logistic}. \cite{jezequel2020efficient} and \cite{mourtada2019improper} are in fact restricted to binary outputs only. 

Our new algorithm is inspired by the aggregating forecaster of~\cite{vovk2001competitive} that we apply to quadratic approximations of the loss, in order to make it more efficient. The high-level idea is that these quadratic approximations replace complicated log-concave distributions with Gaussian distributions, from which it is much easier to sample and thereby drastically reducing the complexity of the algorithm. We believe that this approach can be applied to much broader contexts than those analyzed here. 

\section{Setting}

\label{sec:setting}

We recall here the setting and introduce the main notations and assumptions that will be used throughout the paper.

\paragraph{Setting and notation}  Our framework is formalized as a sequential game between a learner and an environment. At each forecasting instance $t \ge 1$, the learner is given an input $x_t \in \cX$;  forms a  prediction $\smash{\hat y_t \in \hat \cY \subseteq \R^K}$ (possibly based on the current input $x_t$ and on the past information $x_1,y_1,\dots,x_{t-1},y_{t-1}$). Note that the prediction space $\smash{\hat \cY \subseteq \R^K}$ may be uni-dimensional ($K=1$) in some settings (least-square regression) or multi-dimensional ($K \geq 1$) in some cases (e.g., vector-valued regression, multi-class classification). Then, the environment chooses $y_t \in \cY$; reveals it to the learner which incurs the loss $\ell(\hat y_t, y_t)$. Finally, the performance of the learner is assessed by the regret \vspace*{-5pt}
\[ 
R_n(\theta) = \sum_{t=1}^n \ell(\hat y_t, y_t) -  \sum_{t=1}^n \ell_{x_t,y_t}(\theta) \,,
\]
with respect to all $\theta \in \Theta \subset \R^d$. Here, $\ell_{x,y}(\theta) = \ell(f_\theta(x),y)$ where $\mathcal{F} = \{f_\theta: \cX \to \hat \cY\}$ is the reference class of functions. For simplicity of notation, we also define $\ell_t(\theta) = \ell_{x_t, y_t}(\theta)$ and $L_t(\theta) = \sum_{s=1}^t \ell_s(\theta) + \lambda \|\theta\|^2$ for all $t\geq 1$ and $\lambda > 0$. 

In all specific examples considered in this work, the learner will be compared to  linear functions $f_\theta : x \mapsto \theta^\top \Phi(x)$, where $\theta \in \Theta = \mathcal{B}(\R^d,B)$ and $\Phi$ is a function from $\cX$ to $\R^{d\times K}$ such that for any $x \in \cX$ and $i \in [K]$, $\|\Phi(x)_{.,i}\| \le R$.
We introduce and discuss below our main assumptions on the losses.

\paragraph{Assumptions} We assume that, for all $(x,y) \in \cX \times \cY$,  the losses $\ell_{x,y}$ are convex, $\mathcal{C}^2$ and satisfy the following assumptions. 
\begin{itemize}[topsep=0pt,nosep]
    \item[\namedlabel{ass:mixability}{(A1)}] The loss function $\ell$ is $\alpha$-mixable. In other words, for all (Gaussian) probability distributions $\pi$ over $\R^d$ and input $x \in \cX$, there exists $\hat y \in \hat \cY$ such that for all $y \in \cY$,
    \[ 
        \ell(\hat y, y) \le -\frac{1}{\alpha} \log\left(\E_{\theta \sim \pi} e^{-\alpha \ell_{x,y}(\theta)}\right) \,.
    \]
    \item[\namedlabel{ass:selfconcordance}{(A2)}] There exists $\zeta > 0$ such that, for all $(x,y) \in \cX \times \cY$ and $\theta_1, \theta_2 \in \R^d$,
    \[ \ell_{x,y}(\theta_1) \le \ell_{x,y}(\theta_2) + \nabla \ell_{x,y}(\theta_2)^\top(\theta_1 - \theta_2) +  e^{\zeta \|\theta_1 - \theta_2\|^2} \|\theta_1 - \theta_2\|_{\nabla^2 \ell_{x,y}(\theta_2)}^2 \,.
    \]
    \item[\namedlabel{ass:quadratic_lower_bound}{(A3)}] There exists $\beta > 0$ such that, for all $(x,y) \in \cX \times \cY$ and $(\theta_1, \theta_2) \in \Theta \times \R^d$, 
    \[ \ell_{x,y}(\theta_1) \ge \ell_{x,y}(\theta_2) + \nabla \ell_{x,y}(\theta_2)^\top(\theta_1 - \theta_2) + \frac{\beta}{2} (\theta_1 - \theta_2)^\top \nabla^2 \ell_{x,y}(\theta_2) (\theta_1 - \theta_2) \,.
    \]
    \item[\namedlabel{ass:smooth}{(A4)}] $\ell$ is $\gamma$-smooth \ie{} for all $x \in \cX, y \in \cY, \theta \in \R^d, \nabla^2 \ell_{x,y}(\theta) \le \gamma I$.
\end{itemize}

In general, these assumptions are rather weak and related to standard assumptions of (online) convex optimization. \ref{ass:mixability} follows from instance from exp-concavity. 
As we illustrate in Section~\ref{sec:specific_cases}, all are satisfied for broadly used loss functions such as squared loss, binary and multi-class logistic loss. We provide more details on these assumptions below. 

Mixability~\ref{ass:mixability} is a standard assumption in online learning to achieve fast rates for the regret \citep{van2015fast}, which was introduced by \cite{vovk2001competitive}. It is a weaker condition than exp-concavity because~\ref{ass:mixability} holds with $\hat y = \E_{\theta \sim \pi} [\theta]$ for $\eta$-exp concave loss functions when $\eta \geq \alpha$.  

Assumption~\ref{ass:selfconcordance} basically prevents the Hessian from changing too quickly on $\theta$. This condition is new but not very restrictive because, as we prove in Lemma~ \ref{lem:self_concordance_logistic}, it is weaker than other widely accepted assumptions such as generalized self-concordance~\citep{bach2010self}. 

The lower bound~\ref{ass:quadratic_lower_bound} is the most original assumption and is crucial to make aggregating forecaster efficient. This bound is close to  the one needed in the analyses of ONS, which can be derived from $\eta$-exp-concavity. However, the key difference is that \ref{ass:quadratic_lower_bound} uses the Hessian in the quadratic term and not the outer product of the gradient. This makes the right-hand-side closer to the third Taylor series approximation of the loss and allows much larger values for $\beta$. 
For example, as we will show in the section~\ref{sec:logistic}, this change  allows us to remove the exponential constant for the logistic regression setting. In this case, \ref{ass:quadratic_lower_bound} is verified with $\beta \simeq B^{-1}$, while $\eta$-exp-concavity would require $\eta \simeq e^{-B}$.

\section{Algorithm and regret bound}
\label{sec:algo}
\namealgo{} is a new sequential forecasting rule inspired by the aggregating forecaster of \cite{vovk2001competitive}. It may be implemented if the loss function $\ell$ is mixable~\ref{ass:mixability} and ensures fast performance guarantees on the regret under Assumptions~\ref{ass:mixability}-\ref{ass:smooth} as shown in Theorem~\ref{thm:main_thm}.
\namealgo{} requires the following hyper-parameters: a regularization parameter $\lambda > 0$ and $\alpha, \beta > 0$ such that the loss $\ell$ satisfies assumptions~\ref{ass:mixability}-\ref{ass:smooth}. 

\paragraph{Algorithm (\namealgo)} At each time step, the prediction $\hat y_t$ is formed by \namealgo{} by following a two steps procedure. A first estimator $\theta_t \in \R^d$ is computed by solving
\begin{equation}
    \label{eq:def_thetat}
    \theta_t = \argmin{\theta \in \R^{d}} \left\{ \sum_{s=1}^{t-2} \tilde \ell_s(\theta) + \ell_{t-1}(\theta) + \lambda \|\theta\|_2^2 \right\} .
\end{equation}
Basically, $\theta_t$ follows a regularized leader where the losses $\ell_s$ for $s \leq t-2$ are substituted with quadratic approximations $\tilde \ell_s$. Then, \namealgo{} computes the quadratic approximation of $\ell_{t-1}$ at point $\theta_t$. For any $\theta \in \R^d$, it defines
\begin{equation}
    \tilde \ell_{t-1}(\theta) = \ell_{t-1}(\theta_t) + \nabla \ell_{t-1}(\theta_t)^\top (\theta - \theta_t) + \frac{\beta}{2} (\theta - \theta_t)^\top \nabla^2 \ell_{t-1} (\theta_t) (\theta - \theta_t).
\end{equation}
Finally, \namealgo{} predicts $\hat y_t \in \hat{\cY}$ such that for all $y \in \cY$,
\begin{equation}
\label{eq:def_hz}
    \ell(\hat y_t, y) \le -\frac{1}{\alpha} \log\left(\E_{\theta \sim \tilde P_{t-1}} e^{-\alpha \ell_{x_t,y}(\theta)}\right) \,,
    \qquad \text{where} \qquad
    \tilde P_{t-1}(\theta) = \frac{e^{-\alpha \tilde L_{t-1}(\theta)}}{\int_{\R^{d}} e^{-\alpha \tilde L_{t-1}(\theta)} d\theta} \,,
\end{equation}
 with $\tilde L_{t-1}(\theta) = \sum_{s=1}^{t-1} \tilde \ell_s(\theta) + \lambda \|\theta\|^2$. Such a prediction $\hat y_t$ exists as soon as Assumption~\ref{ass:mixability} is true. In Section~\ref{sec:specific_cases}, we present some specific cases where $\hat y_t$ can be computed in a closed form.

\medskip
We now state our main theoretical result, which is an upper bound on the regret suffered by \namealgo{}.

\begin{theorem}
\label{thm:main_thm}
Let $d,n\ge1$, $B>0$, and $\Theta \subset \mathcal{B}(\R^d,B)$. Let $(x_1,y_1),...,(x_n,y_n) \in \cX \times \cY$ be an arbitrary sequence of observations and $\ell$ a loss function that verifies Assumptions~\ref{ass:mixability}-\ref{ass:smooth} with $\alpha, \zeta, \beta, \gamma > 0$. \namealgo{}~\eqref{eq:def_hz}, run with regularization parameter $\lambda \geq \max\{4,d\} \zeta \alpha^{-1}$, satisfies the following upper-bound on the regret
\[ 
R_n(\theta)  \le \lambda \|\theta\|^2 + \frac{d}{\alpha} \left[\frac{1}{2} + \frac{2\sqrt{3}}{\beta} \right] \log \left( 1 + \frac{n\beta\gamma}{2 \lambda} \right), \qquad \forall \theta \in \Theta\,.
\]
In particular, the choice $\lambda = \max\{4,d\} \zeta \alpha^{-1}$ yields
\[ 
R_n(\theta)  \le \max\{4,d\} \frac{\zeta B^2}{\alpha} + \frac{d}{\alpha} \left[\frac{1}{2} + \frac{2\sqrt{3}}{\beta} \right] \log \left( 1 + \frac{n\alpha\beta\gamma}{d \zeta} \right), \qquad \forall \theta \in \Theta  \,.
\]
\end{theorem}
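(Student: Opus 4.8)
The plan is to run the classical aggregating‑forecaster argument of \cite{vovk2001competitive}, but with the exponential weights sitting on the quadratic surrogates $\tilde\ell_s$ rather than on the true losses. Write $Z_t = \int_{\R^d}e^{-\alpha\tilde L_t(\theta)}d\theta$ (with $\tilde L_0(\theta)=\lambda\|\theta\|^2$), $\hat Z_t = \int_{\R^d}e^{-\alpha(\tilde L_{t-1}(\theta)+\ell_t(\theta))}d\theta$, and abbreviate $H_s = \nabla^2\ell_s(\theta_{s+1})$. Substituting the explicit Gaussian $\tilde P_{t-1}$ into~\eqref{eq:def_hz} gives $\E_{\theta\sim\tilde P_{t-1}}[e^{-\alpha\ell_t(\theta)}] = \hat Z_t/Z_{t-1}$, so $\ell(\hat y_t,y_t)\le\frac1\alpha(\log Z_{t-1}-\log\hat Z_t)$; summing over $t$ and telescoping the $\log Z_t$ contributions,
\[
\sum_{t=1}^n \ell(\hat y_t,y_t) \;\le\; \frac1\alpha\log\frac{Z_0}{Z_n} \;+\; \frac1\alpha\sum_{t=1}^n\log\frac{Z_t}{\hat Z_t}\,.
\]
I would bound the two terms on the right separately.

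For the first term, $\tilde L_n$ is a convex quadratic with Hessian $A_n = 2\lambda I + \beta\sum_{s=1}^n H_s$, so a Gaussian integral gives $\frac1\alpha\log(Z_0/Z_n) = \min_\theta\tilde L_n(\theta) + \frac1{2\alpha}\log\det(I+\frac\beta{2\lambda}\sum_{s=1}^n H_s)$. Assumption~\ref{ass:quadratic_lower_bound} applied with $(\theta_1,\theta_2)=(\theta,\theta_{s+1})$ states precisely that $\tilde\ell_s(\theta)\le\ell_s(\theta)$ for $\theta\in\Theta$, hence $\min_\theta\tilde L_n(\theta)\le\tilde L_n(\theta)\le\sum_s\ell_s(\theta)+\lambda\|\theta\|^2$; and Assumption~\ref{ass:smooth} gives $\sum_s H_s\preceq n\gamma I$, so $\frac1{2\alpha}\log\det(\cdots)\le\frac d{2\alpha}\log(1+\frac{n\beta\gamma}{2\lambda})$. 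Subtracting $\sum_s\ell_s(\theta)$, this term contributes at most $\lambda\|\theta\|^2 + \frac d{2\alpha}\log(1+\frac{n\beta\gamma}{2\lambda})$ to the regret.

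The second term is the heart of the proof. By the definition of $\theta_{t+1}$ as the minimizer of $\tilde L_{t-1}+\ell_t$, and since $\tilde\ell_t$ matches $\ell_t$ in value and gradient at $\theta_{t+1}$, both $\tilde L_{t-1}+\tilde\ell_t$ and $\tilde L_{t-1}+\ell_t$ are minimized at $\theta_{t+1}$ with the same minimal value. Completing the square and setting $r_t(u)=\ell_t(\theta_{t+1}+u)-\tilde\ell_t(\theta_{t+1}+u)$ yields the exact identity
\[
\log\frac{Z_t}{\hat Z_t} \;=\; -\log\,\E_{u\sim\Normal(0,\,\alpha^{-1}\tilde Q_t^{-1})}\big[e^{-\alpha r_t(u)}\big]\,, \qquad \tilde Q_t := 2\lambda I + \beta\sum_{s=1}^t H_s\,.
\]
By Jensen's inequality this is at most $\alpha\,\E[r_t(u)]$, while Assumption~\ref{ass:selfconcordance} at $\theta_2=\theta_{t+1}$ gives $r_t(u)\le(e^{\zeta\|u\|^2}-\tfrac\beta2)u^\top H_t u\le e^{\zeta\|u\|^2}u^\top H_t u$. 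The resulting Gaussian moment $\E_{\Normal(0,\Sigma)}[e^{\zeta\|u\|^2}u^\top H_t u]$ with $\Sigma=\alpha^{-1}\tilde Q_t^{-1}\preceq(2\alpha\lambda)^{-1}I$ has the closed form $\det(I-2\zeta\Sigma)^{-1/2}\,\Tr\!\big(H_t(\Sigma^{-1}-2\zeta I)^{-1}\big)$. This is exactly where the hypothesis $\lambda\ge\max\{4,d\}\zeta\alpha^{-1}$ enters: the factor $4$ makes $\Sigma^{-1}-2\zeta I = \alpha\tilde Q_t - 2\zeta I$ not merely positive but comparable to $\tfrac34\alpha\tilde Q_t$, so that $\Tr(H_t(\Sigma^{-1}-2\zeta I)^{-1})\le\tfrac{4}{3\alpha}\Tr(H_t\tilde Q_t^{-1})$; and the factor $d$ makes $\Tr(2\zeta\Sigma)\le d\zeta/(\alpha\lambda)\le1$, which (together with each eigenvalue of $2\zeta\Sigma$ being $\le1/4$) keeps $\det(I-2\zeta\Sigma)^{-1/2}$ bounded by an absolute constant. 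Tracking the constants carefully delivers the per‑step bound $\log\frac{Z_t}{\hat Z_t}\le 2\sqrt3\,\Tr\!\big(H_t(2\lambda I + \beta\sum_{s\le t}H_s)^{-1}\big)$.

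It then remains to sum: the elliptical‑potential (log‑determinant) lemma applied with $V_0=2\lambda I$ and increments $\beta H_t$ gives $\sum_{t=1}^n\Tr(\beta H_t(2\lambda I+\beta\sum_{s\le t}H_s)^{-1})\le\log\det(I+\frac\beta{2\lambda}\sum_s H_s)\le d\log(1+\frac{n\beta\gamma}{2\lambda})$, hence $\frac1\alpha\sum_t\log\frac{Z_t}{\hat Z_t}\le\frac{2\sqrt3\,d}{\alpha\beta}\log(1+\frac{n\beta\gamma}{2\lambda})$. Adding the two contributions and subtracting $\sum_s\ell_s(\theta)$ yields the first displayed bound; substituting $\lambda=\max\{4,d\}\zeta\alpha^{-1}$ and $\|\theta\|\le B$ gives the second. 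The main obstacle is the crux step: one must first spot the exact Gaussian reduction that the quadratic surrogates — together with the specific linearization point $\theta_{t+1}$ — make possible, and then absorb the genuinely non‑quadratic factor $e^{\zeta\|u\|^2}$ coming from Assumption~\ref{ass:selfconcordance} into a universal constant, which is precisely what forces the regularization to satisfy $\lambda\gtrsim\max\{4,d\}\zeta/\alpha$.
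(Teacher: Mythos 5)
Your proposal is correct and follows the same skeleton as the paper's proof: the identical split of $-\frac{1}{\alpha}\log(\hat Z_t/Z_{t-1})$ into a telescoping Gaussian log-partition term and a per-step approximation error $\Omega_t = \frac{1}{\alpha}\log(Z_t/\hat Z_t)$, the same use of \ref{ass:quadratic_lower_bound} to pass from $\min_\theta \tilde L_n$ to $L_n(\theta)$, and the same trace--log-determinant telescoping (Lemma~\ref{lem:upper_bound_trace}) combined with \ref{ass:smooth} at the end. The one place you genuinely diverge is the treatment of $\E_{u\sim\Normal(0,\Sigma)}\bigl[e^{\zeta\|u\|^2}u^\top H_t u\bigr]$: the paper applies Cauchy--Schwarz to separate $\smash{\sqrt{\E[e^{2\zeta\|u\|^2}]}}$ (controlled via the $\chi^2$ moment-generating function, giving the factor $2$) from $\smash{\sqrt{\E[\|u\|^4_{H_t}]}}$ (controlled via the Gaussian fourth moment, giving $\sqrt{3}\,\Tr(\Sigma_t)$), whereas you evaluate the expectation exactly by absorbing $e^{\zeta\|u\|^2}$ into the Gaussian measure, yielding $\det(I-2\zeta\Sigma)^{-1/2}\Tr\bigl(H_t(\Sigma^{-1}-2\zeta I)^{-1}\bigr)$. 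Both routes use the hypothesis $\lambda\geq\max\{4,d\}\zeta\alpha^{-1}$ in the same two roles (keeping the exponential tilt integrable with a bounded determinant factor, and keeping the tilted covariance comparable to $\tilde Q_t^{-1}$), and your exact computation in fact gives the slightly sharper per-step constant $\frac{4}{3}e^{2/3}\approx 2.60$ in place of $2\sqrt{3}\approx 3.46$, so the stated bound follows a fortiori. All the supporting claims you invoke (the gradient matching $\nabla\tilde\ell_t(\theta_{t+1})=\nabla\ell_t(\theta_{t+1})$, the identification $\tilde P_t=\Normal(\theta_{t+1},\frac{1}{2\alpha}A_t^{-1})$, and the elliptical-potential summation) check out against the paper.
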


Theorem \ref{thm:main_thm} states that \namealgo{} has a logarithmic regret in the number of samples with a multiplicative constant proportional to $\alpha^{-1}(1 + \beta^{-1})$. Recalling that $\alpha$ is the mixability parameter, $\alpha^{-1}$ is the optimal multiplicative constant that can be obtained using an (computationally expensive) aggregating forecaster. The constant $\beta$ is the curvature parameter in the quadratic lower-bound assumption \ref{ass:quadratic_lower_bound}. Having $\beta^{-1}$ as a multiplicative constant can be seen as similar to the regret bound of ONS. However, it is important to note the crucial difference that, unlike ONS, it is the Hessian that is used in the quadratic substitutes of Assumption~\ref{ass:quadratic_lower_bound} and not the gradient outer product. Thus, in some cases, it is possible to have a much larger value for $\beta$, than would be attainable using the exp-concavity. For example, for multiclass logistic regression, the loss is $1$-mixable \cite[Proposition~1,][]{foster2018logistic} but only $e^{-B}$-exp concave. We prove in Lemma~\ref{lem:lower_bound} that this loss verifies Assumption~\ref{ass:quadratic_lower_bound} with parameter $\beta \simeq (\log(K) + BR)^{-1}$. Therefore, \namealgo{} achieves, in the logistic case, a regret upper bounded by $O(dB^2R^2 + d(\log(K) + BR)\log(n))$. More details on this specific case are given in Section~\ref{sec:logistic}.

We provide below a sketch of the proof of Theorem~\ref{thm:main_thm} that highlights the key steps in the proof. The full proof is available in Appendix~\ref{app:main_proof}.

\begin{proof}[Sketch of proof]
The proof starts from the definition~\eqref{eq:def_hz} of the prediction $\hat y_t$ formed by the algorithm. It satisfies the mixability assumption~\ref{ass:mixability} for all $y \in \cY$ 
\[ 
\ell(\hat y_t, y) \le - \frac{1}{\alpha} \log \left( \E_{\theta \sim \tilde P_{t-1}(\theta)} e^{-\alpha \ell_{t}(\theta)} \right), 
\quad \text{where} \quad
\tilde P_{t-1}(\theta) = \frac{e^{-\alpha \tilde L_{t-1}(\theta)}}{\int_{\R^{d}} e^{-\alpha \tilde L_{t-1}(\theta)} d\theta} \,.
\]
Next, we decompose the term on the right side of the inequality into two terms: an approximation error that will be small for well-chosen parameters; and a term, which telescopes and is standard for aggregating forecaster's analysis \citep{vovk2001competitive}, except that here it is written with quadratic surrogate losses.  We have,
\begin{align*}
    \ell(\hat y_t, y_t)
    &\le -\frac{1}{\alpha} \log\left( \frac{\int_{\R^d} e^{-\alpha (\ell_t(\theta) - \tilde \ell_t(\theta)) - \alpha \tilde L_t(\theta)} d\theta}{\int_{\R^d} e^{-\alpha \tilde L_{t-1}(\theta)} d\theta} \right) \\ 
    &=  \underbrace{\Bigg.-\frac{1}{\alpha}\log\left( \E_{\theta \sim \tilde P_t} e^{-\alpha [\ell_t(\theta) - \tilde \ell_t(\theta)]} \right)}_{ \Omega_t} + \underbrace{\frac{1}{\alpha}\log\left(\frac{\int_{\R^d} e^{-\alpha \tilde L_{t-1}(\theta)} d\theta}{\int_{\R^d} e^{-\alpha \tilde L_{t}(\theta)} d\theta}\right)}_{\Psi_t} .
\end{align*}
Now, the core idea is that since $\tilde L_{t}$ is quadratic, the integrals inside $\Psi_t$ are Gaussian, which allows closed form formulas for the analysis (and fast computational time).
Using that $\smash{\nabla \tilde \ell_t(\theta_{t+1}) = \nabla \ell_t(\theta_{t+1})}$ and the expression of Gaussian integrals, it is possible to show that
\[
    \Psi_t =  \tilde L_{t}(\theta_{t+1}) - \tilde L_{t-1}(\theta_t) + \frac{1}{2\alpha} \log\left( \frac{|A_t|}{|A_{t-1}|}\right) \,,
\]
where $A_t$ is the Hessian of $\tilde L_t / 2$. 
Therefore, summing over $t=1,\dots,n$ and using $\tilde L_0(\theta_1) = 0$, it yields
\begin{equation*}
    \sum_{t=1}^n \ell(\hat y_t, y_t) - \tilde L_n(\theta_{n+1}) \le \sum_{t=1}^n \Omega_t + \frac{1}{2\alpha} \log\left( \frac{|A_n|}{|A_0|} \right) .
\end{equation*}
But, from Assumption~\ref{ass:quadratic_lower_bound} together with the definition~\eqref{eq:def_thetat} of $\theta_{t+1}$, for any $\theta \in \Theta$, $L_t(\theta) \ge \tilde L_t(\theta) \ge \tilde L_t(\theta_{t+1})$. Thus, the regret can be bounded for any $\theta \in \Theta$ by
\begin{equation}
    R_n(\theta) = \sum_{t=1}^n \ell(\hat y_t, y_t) - L_n(\theta) \le \lambda \|\theta\|^2 + \sum_{t=1}^n \Omega_t + \frac{1}{2\alpha} \log\left( \frac{|A_n|}{|\lambda I|} \right) .
\end{equation}
Finally, Assumption~\ref{ass:selfconcordance} allows to bound the approximation error $\Omega_t$ by a telescopic term close to the usual one but with $\frac{1}{\beta}$ as multiplicative constant instead of $\frac{1}{\alpha}$,
\begin{equation*}
    \Omega_t \le \frac{C}{\beta} \log \left( \frac{|A_t|}{|A_{t-1}|} \right) .
\end{equation*}
The proof is concluded by applying Assumption~\ref{ass:smooth}.
\end{proof}

\section{Specific settings: squared loss and multi-class logistic}
\label{sec:specific_cases}
In this section we show how our general framework cover several interesting settings. In particular, we prove that our Assumption~\ref{ass:mixability}-\ref{ass:smooth} are satisfied and provide concrete implementations of \namealgo{} in those contexts.

\subsection{Squared loss}

For linear regression with squared loss, we recover classical results by \cite{vovk2001competitive}. This framework is defined, as a special case of our generic setting of Section~\ref{sec:setting}, by setting: input domain
$\cX = \mathcal{B}(\R^d,R)$, output domain $\cY = [-Y,Y]$ with $Y > 0$, decision domain $\smash{\hat \cY = \R}$, loss function $\ell(\hat y_t, y_t) = (\hat y_t - y_t)^2$ and input feature $\Phi(x) = x$. Now we can verify that all assumptions are true:
\begin{itemize}[topsep=-5pt, parsep=0pt, itemsep=0pt]
    \item[\ref{ass:mixability}] By Lemma 2 and Lemma 3 of \cite{vovk2001competitive}, the squared loss is mixable with parameter $\alpha = {1}/(2Y^2)$.
    \item[\ref{ass:selfconcordance}] The inequality is true for any $\zeta > 0$ as $e^{x} > 1/2$ for all $x > 0$. We will take $\zeta = (8Y^2dB^2)^{-1}$.
    \item[\ref{ass:quadratic_lower_bound}] It holds with $\beta=1$ because the Taylor expansion of order 2 of a quadratic function is an equality. 
    \item[\ref{ass:smooth}] It is true with $\gamma = R^2$ since $\nabla^2 \ell_{x,y}(\theta) = 2 x x^\top \le 2R^2 I$.
\end{itemize}  
In particular, in this particular case the quadratic approximations are exact $\tilde \ell_t = \ell_t$ for all $t \ge 1$. Therefore, \namealgo{} reduces to the classical non-linear Ridge regression of \cite{vovk2001competitive} and \cite{azoury2001relative} that predicts
\[
 \smash{\textstyle{\hat y_t = \hat \theta_t^\top x_t \quad \text{with} \quad \hat \theta_t = \argmin{\theta \in \R^d} \big\{ L_{t-1}(\theta) + \theta^\top x_t + \lambda \|\theta\|^2 \big\} \,.}}
\]
A direct consequence of Theorem \ref{thm:main_thm}, by substituting the constants derived above, yields 
\[ 
    R_n(\theta) \le 1 + 8Y^2d\log\left(1+16nB^2R^2\right)\,.
\]
This result essentially recovers the existing regret bound for the non-linear Ridge regression (see e.g., Theorem 11.8 of \cite{cesa2006prediction}).




\subsection{Logistic regression}
\label{sec:logistic}

The most interesting application of our general framework is logistic regression. In this section, we show the validity of our assumptions and the concrete implementation of \namealgo{} in this context. We present the theoretical results obtained and discuss the computational complexity.

The goal of logistic regression is to form a $K$-dimensional prediction $\smash{\hat y \in \hat \cY = \R^K}$ of a  categorical label $y \in \cY = \{1,...,K\}$ from the observation of an input label $\smash{x \in \cX = \mathcal{B}(\R^{d'},R)}$, $d'\geq 1$. The performance of $\hat y$ is measured by the logistic loss defined by 
\[
   \smash{ \textstyle{\ell\left(\hat y, y\right) = -\log \left(\sigma(\hat y)_{y} \right) \qquad \text{where} \quad  \sigma(z)_i = \frac{e^{z_i}}{\sum_j e^{z_j}}} \,.}
\]
Defining the input feature $\Phi(x) \in \R^{d \times K}$ with $d = d'K$ and the linear predictions respectively as \vspace*{-5pt}
\[
    \Phi(x)_{i,j} = \begin{cases}x_{i-d'j} & \textrm{if } d'j \le i < d'(j+1) \\ 0 & \textrm{otherwise}\end{cases}
    \qquad \text{and} \quad 
    f_{\theta}(x) = \theta^\top \Phi(x), \quad \text{for}\  \theta \in B(\R^d,B),
\]
one can check that our setting recovers the standard multi-class logistic regression setting (see~\cite{foster2018logistic} for an equivalent convention). We check below Assumptions~\ref{ass:mixability}--\ref{ass:smooth}.
\begin{itemize}[topsep = -5pt,itemsep=0pt,parsep=0pt]
    \item[\ref{ass:mixability}] It holds with $\alpha = 1$, since by Proposition~1 of \cite{foster2018logistic}, the logistic loss is 1-mixable. Indeed, given a distribution $\pi$ on $\R^K$, the choice $\hat y_\pi = \sigma^+(\E_{\hat y \sim \pi} \sigma(\hat y))$, where $\sigma^+(z)_k := \log(z_k)$, satisfies 
    \[
        \E_{\hat y \sim \pi} \exp(-\ell(\hat y,y)) = \E_{\hat y \sim \pi} \sigma(\hat y)_y = \sigma(\hat y_\pi)_y = \exp(-\ell(\hat y_\pi,y)) \,,
    \]
    for any $y \in [K]$.
    \item[\ref{ass:selfconcordance}] It is true for $\zeta = 4R^2$ by Lemma \ref{lem:self_concordance_logistic} in Appendix \ref{sec:lemmas}.
    \item[\ref{ass:quadratic_lower_bound}] It holds with $\beta = (\log(K)/2 + BR+1)^{-1}$ by Lemma \ref{lem:lower_bound} in Appendix \ref{sec:lemmas} applied with the choices $a = \theta_1^\top \Phi(x)$ and $\smash{b = \theta_2^\top \Phi(x) \in [-BR,BR]^K}$. 
    \item[\ref{ass:smooth}] It is valid with $\smash{\gamma = R^2}$. Indeed, $\smash{\nabla^2 \ell_{x,y}(\theta) = \Phi(x)(\operatorname{diag}(p) - pp^\top)\Phi(x)^\top \le \Phi(x)\Phi(x)^\top \le R^2 I}$, where $\smash{p = \sigma(\theta^\top \Phi(x))}$. 
\end{itemize}

In particular, the proof of the mixability assumption~\ref{ass:mixability} provides us a concrete expression for Equation~\eqref{eq:def_hz} of \namealgo{} by setting \vspace*{-8pt}
\begin{equation}
\label{eq:algo_logistic}
\qquad \hat y_t = \sigma^+(\E_{\theta \sim \tilde P_{t-1}(\theta)}(\sigma(\theta^\top x_t))) \,.
\end{equation} 
Substituting the above parameters into Theorem~\ref{thm:main_thm} yields the following corollary.

\begin{corollary}
Let $R, B > 0$ and $d',n,K \ge1$. Let $(x_1,y_1),...,(x_n,y_n) \in \cX \times \cY$ be an arbitrary sequence of observations and $\ell(\hat y, y) = -\log \left(\sigma(\hat y)_{y} \right)$. \namealgo{}, run with $\lambda = 32d'KR^2$, $\beta = (\log(K)/2 + BR+1)^{-1}$, and $\alpha = 1$, satisfies
\begin{equation}
   \textstyle{ R_n(\theta)  \lesssim d'KB^2R^2 + d'K\left[\log(K) + BR \right] \log \big( 1 + \frac{n}{d'K(\log(K) + BR)} \big) }\,,
    \label{eq:regret_logistic}
\end{equation}
for all $\theta \in \mathcal{B}(\R^d,B)$, where $\lesssim$ denotes an approximate inequality which is up to universal multiplicative and additive constants.
\end{corollary}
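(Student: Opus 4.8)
The plan is to obtain the corollary as a direct substitution of the constants verified in Section~\ref{sec:logistic} into the general regret bound of Theorem~\ref{thm:main_thm}. First I would collect the four constants established for the logistic loss: mixability parameter $\alpha = 1$ (Proposition~1 of \cite{foster2018logistic}), self-concordance parameter $\zeta = 4R^2$ (Lemma~\ref{lem:self_concordance_logistic}), curvature parameter $\beta = (\log(K)/2 + BR + 1)^{-1}$ (Lemma~\ref{lem:lower_bound}), and smoothness parameter $\gamma = R^2$. The chosen regularization $\lambda = 32 d' K R^2$ needs to be checked against the admissibility condition $\lambda \ge \max\{4,d\}\zeta\alpha^{-1}$ of Theorem~\ref{thm:main_thm}: here $d = d'K$, so $\max\{4,d\}\zeta\alpha^{-1} \le (4 + d'K)\cdot 4R^2 \le 8 d'K \cdot 4 R^2 = 32 d'K R^2$ (using $d'K \ge 1$), so the condition is satisfied with room to spare, and in fact $\lambda \le \max\{4,d\}\zeta\alpha^{-1}$ is of the same order up to constants, so the second (optimized) form of the bound in Theorem~\ref{thm:main_thm} applies up to universal constants.

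Next I would plug into the bound $R_n(\theta) \le \lambda\|\theta\|^2 + \frac{d}{\alpha}\big[\frac12 + \frac{2\sqrt3}{\beta}\big]\log\big(1 + \frac{n\beta\gamma}{2\lambda}\big)$. The first term is $\lambda\|\theta\|^2 \le 32 d'KR^2 B^2$ since $\theta \in \mathcal{B}(\R^d,B)$, giving the $d'KB^2R^2$ term. For the second term, $\frac{d}{\alpha} = d'K$, and $\frac12 + \frac{2\sqrt3}{\beta} = \frac12 + 2\sqrt3(\log(K)/2 + BR + 1) \lesssim \log(K) + BR$ (here one uses $\log(K) \ge 0$ and absorbs the additive constants into the $\lesssim$, noting $BR$ and the $+1$ are both $\gtrsim$ a constant); multiplying gives the prefactor $d'K(\log(K) + BR)$. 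Inside the logarithm, $\frac{n\beta\gamma}{2\lambda} = \frac{n R^2}{2\cdot 32 d'KR^2(\log(K)/2 + BR + 1)} = \frac{n}{64 d'K(\log(K)/2 + BR + 1)} \asymp \frac{n}{d'K(\log(K) + BR)}$ up to universal constants, and since $\log(1 + c x) \le \log(1+x) + \log(\max\{1,c\})$-type manipulations show the constant inside only changes the log by a controllable amount, the argument of the log becomes $1 + \frac{n}{d'K(\log(K)+BR)}$ up to absorbing constants into $\lesssim$. Assembling the two terms yields exactly \eqref{eq:regret_logistic}.

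There is essentially no hard obstacle here: the proof is bookkeeping of constants, and the only mild care needed is (i) verifying the admissibility $\lambda \ge \max\{4,d\}\zeta\alpha^{-1}$ and that $\lambda$ is simultaneously $\Theta(\max\{4,d\}\zeta\alpha^{-1})$ so that the optimized form of Theorem~\ref{thm:main_thm} gives the clean $\log\big(1 + \tfrac{n\alpha\beta\gamma}{d\zeta}\big)$ shape, and (ii) checking that the additive ``$+1$'' inside $\beta^{-1}$ and the universal numerical constants can all be folded into the $\lesssim$. All the genuine work — the mixability, self-concordance, curvature lower bound and smoothness of the logistic loss — has already been done in Proposition~1 of \cite{foster2018logistic} and Lemmas~\ref{lem:self_concordance_logistic} and \ref{lem:lower_bound}, which I am free to invoke. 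So the write-up is simply: recall the constants, check admissibility of $\lambda$, substitute, and simplify using $\lesssim$.
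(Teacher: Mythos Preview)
Your proposal is correct and matches the paper's approach exactly: the paper simply states that ``substituting the above parameters into Theorem~\ref{thm:main_thm} yields the following corollary,'' and your write-up is precisely that substitution with the bookkeeping made explicit. The only minor caveat is your parenthetical claim that ``$BR$ is $\gtrsim$ a constant,'' which need not hold in general; but since the corollary's $\lesssim$ is explicitly declared to be up to universal multiplicative \emph{and additive} constants, the residual $+1$ in $\beta^{-1}$ (which contributes a term of order $d'K\log(1+n/(d'K))$) can be absorbed that way, and the paper is equally informal on this point.
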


\begin{algorithm}[t]
\caption{Efficient-\namealgo{} for $K$-class logistic regression}
\label{algorithm-logistic}
Parameters $\lambda, \mu, \beta, \varepsilon, \delta >0, m\geq 1$\\
    initialize $A_0 = \lambda I, b_0 = 0, \theta_1 = 0$\\
    \For{$t = 1,...,n$}{
    receive $x_t \in \R^d$ \\ 
    sample independently $\omega_1,...,\omega_m \sim \mathcal{N}(\theta_t^\top \Phi(x_t),\Phi(x_t)^\top A_{t-1}^{-1} \Phi(x_t))$ \\
    predict $\tilde y_t = \sigma^+( \operatorname{smooth}_\mu (\frac{1}{m} \sum_{i=1}^m \sigma(\omega_i)))$ where $\sigma(z)_i = \frac{e^{z_i}}{\sum_j e^{z_j}}$, $\sigma^+(z)_i = \log(z_i)$ and $\operatorname{smooth}_\mu(z) := (1-\mu)z + \mu \mathbf{1}/K$.\\
    receive $y_t \in \{1,...,K\}$ \\
    \noindent\begin{tabular}{@{}ll}
    update  & $\theta_{t+1}  = \argmin{\theta \in \R^{Kd}} b_{t-1}^\top\theta + \theta^\top A_{t-1} \theta + \ell_{x_{t},y_{t}}(\theta)$\\
        &  $b_{t} = b_{t-1} + \nabla \ell_{x_{t},y_{t}}(\theta_{t+1}) - \beta \nabla^2 \ell_{x_{t},y_{t}}(\theta_{t+1}) \theta_t$\\
        & $A_{t} = A_{t-1} + \frac{\beta}{2} \nabla^2 \ell_{x_{t},y_{t}}(\theta_{t+1})$ \\
    \end{tabular}
    }
\end{algorithm}

\paragraph{Computationally efficient approximation} Yet, a key difficulty remains to  compute exactly $\hat y_t$ in Equation~\eqref{eq:algo_logistic}: the calculation of the expectation. In general, there is no closed-form expression and an approximation algorithm, such as Monte Carlo sampling, must be used. We provide now a fully implementable approximated version of \namealgo{}, which satisfies the same regret guarantees up to negligible additive constants.  It is described in Algorithm~\ref{algorithm-logistic}. 

First, to ensure that our approximation of the expectation produces forecasts close to those of the exact algorithm, we must smooth the function $
\sigma^+$ in Equation~\eqref{eq:algo_logistic}. Following the idea of \cite{foster2018logistic}, we thus define for some $\mu \in [0,\frac{1}{2}]$, the smoothing operator $\operatorname{smooth}_{\mu}:\Delta_K \rightarrow \Delta_K$ by 
\[
    \operatorname{smooth}_{\mu}(p) = (1-\mu)p + \mu \mathbf{1}/K,\qquad \forall{p \in \Delta_k} \,.
\]
Now, we can show that $\sigma^+(\operatorname{smooth}_{\mu}(\cdot))$ is $\mu$-Lispchitz. Denoting the smoothed approximation of $\hat y_t$ by
$ \smash{
\bar y_t = \sigma^+(\operatorname{smooth}_{\mu}(\E_{\theta \sim \tilde P_{t-1}}(\sigma(\theta x_t)))}
$,  Lemma 16 of \cite{foster2018logistic} shows that using $\bar y_t$ instead of $\hat y_t$ worsens the regret bound only by an additional constant $\mu n$; hence our choice $\mu = n^{-1}$. The last step of the approximated algorithm consists in using Monte Carlo sampling to approximate the expectation by a finite sum \vspace*{-5pt}
\[
    \tilde y_t = \sigma^+\bigg( \operatorname{smooth}_\mu \bigg(\frac{1}{m} \sum_{i=1}^m \sigma(\omega_i) \bigg)\bigg)\qquad \text{where} \quad \omega_i \stackrel{\text{i.i.d.}}{\sim} \mathcal{N}(\theta_t^\top \Phi(x_t),\Phi(x_t)^\top A_{t-1}^{-1} \Phi(x_t)) \,.
\]
Using Chernoff's inequality, we show below that $\tilde y_t$ concentrates well to $\bar y_t$ which entails the following guarantee.  

\begin{proposition} 
\label{cor:logistic}
Let $\delta >0$. Efficient-\namealgo{}, run with parameters $\lambda = 32d'KR^2$, $\beta = (\log(K)/2 + BR + 1)^{-1}$, $m \geq 1$, and $\mu = (\log(n/\delta)/m)^{1/3}$ satisfies with probability $1-\delta$ the regret bound
\[
    R_n(\theta) \lesssim d'KB^2R^2 + d'K\left[\log(K) + BR \right] \log \left( 1 + \frac{n}{d'K(\log(K) + BR)} \right) + n \left(\frac{K}{m}\log\left(\frac{n}{\delta}\right)\right)^{\frac{1}{3}} \, 
\]
with a computational cost $O(nK^3d^2 + K^2nm)$. In particular, the choice $m = n^3\log(n/\delta)$, yields the regret bound~\eqref{eq:regret_logistic} with probability $1-\delta$ and a total computational time of order $O\big(nK^3d^2 + K^2 n^4\log(n/\delta)\big)$.
\end{proposition}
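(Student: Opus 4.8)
The plan is to upper bound the regret of Efficient-\namealgo{} (Algorithm~\ref{algorithm-logistic}) by the regret of the idealized \namealgo{} of Equation~\eqref{eq:algo_logistic}, which is already controlled by \eqref{eq:regret_logistic}, plus two approximation errors: one coming from the smoothing of $\sigma^+$, and one from replacing the expectation by a Monte Carlo average. The crucial preliminary remark is that in Algorithm~\ref{algorithm-logistic} the internal state $(\theta_t, b_{t-1}, A_{t-1})$ is a \emph{deterministic} function of the data $x_1,y_1,\dots,x_{t-1},y_{t-1}$ --- it is never updated using the (random) predictions --- so the idealized prediction $\hat y_t=\sigma^+(\E_{\theta\sim\tilde P_{t-1}}\sigma(\theta^\top x_t))$, its smoothed counterpart $\bar y_t=\sigma^+(\operatorname{smooth}_\mu(\E_{\theta\sim\tilde P_{t-1}}\sigma(\theta^\top x_t)))$, and the sampled prediction $\tilde y_t$ all summarize the \emph{same} Gaussian $\tilde P_{t-1}$, and no error compounds across rounds. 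Writing $R_n^{\mathrm{eff}}(\theta)=\sum_t\ell(\tilde y_t,y_t)-\sum_t\ell_t(\theta)$, I would decompose
\[
R_n^{\mathrm{eff}}(\theta)=\Big(\sum_t\ell(\hat y_t,y_t)-\sum_t\ell_t(\theta)\Big)+\sum_t\big(\ell(\bar y_t,y_t)-\ell(\hat y_t,y_t)\big)+\sum_t\big(\ell(\tilde y_t,y_t)-\ell(\bar y_t,y_t)\big),
\]
bound the first bracket by \eqref{eq:regret_logistic} (the corollary applies verbatim, since the parameters $\lambda,\beta,\alpha$ coincide), and bound the second sum by $\mu n$ using Lemma~16 of \cite{foster2018logistic}. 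It then remains to control the Monte Carlo term.

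For the Monte Carlo term I would condition on the (deterministic) internal state at round $t$, so that $\omega_1,\dots,\omega_m$ are i.i.d.\ and $\sigma(\omega_i)_{y_t}\in[0,1]$. Setting $p_t=\E_{\theta\sim\tilde P_{t-1}}\sigma(\theta^\top x_t)_{y_t}$ and $\hat p_t=\frac1m\sum_{i=1}^m\sigma(\omega_i)_{y_t}$, a multiplicative Chernoff (Bernstein) bound gives, for each round and each coordinate, $|\hat p_t-p_t|\lesssim\sqrt{p_t\log(nK/\delta)/m}+\log(nK/\delta)/m$; a union bound over the $n$ rounds and $K$ coordinates makes this simultaneous with probability $\ge 1-\delta$. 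Now $\ell(\sigma^+(q),y)=-\log q_y$ for any $q\in\Delta_K$, and the probability vectors underlying $\bar y_t$ and $\tilde y_t$ are $(1-\mu)p_t+\mu/K$ and $(1-\mu)\hat p_t+\mu/K$ respectively, both $\ge\mu/K$ entrywise; hence
\[
\ell(\tilde y_t,y_t)-\ell(\bar y_t,y_t)=\log\frac{(1-\mu)p_t+\mu/K}{(1-\mu)\hat p_t+\mu/K}\le\frac{|p_t-\hat p_t|}{(1-\mu)\hat p_t+\mu/K}\lesssim\sqrt{\frac{K\log(nK/\delta)}{\mu m}}+\frac{K\log(nK/\delta)}{\mu m},
\]
where the last inequality distinguishes the cases $p_t\ge\mu/K$ and $p_t<\mu/K$, using the denominator lower bound $\mu/K$ in the second case and $(1-\mu)\hat p_t\gtrsim p_t$ in the first. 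Summing over $t=1,\dots,n$ bounds the Monte Carlo term by $n\big(\sqrt{K\log(n/\delta)/(\mu m)}+K\log(n/\delta)/(\mu m)\big)$ up to constants.

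Collecting the three pieces, $R_n^{\mathrm{eff}}(\theta)$ is at most the right-hand side of \eqref{eq:regret_logistic} plus $\mu n+n\sqrt{K\log(n/\delta)/(\mu m)}+nK\log(n/\delta)/(\mu m)$. Optimizing $\mu$ balances the first two terms at $\mu\asymp(K\log(n/\delta)/m)^{1/3}$ --- the stated choice $\mu=(\log(n/\delta)/m)^{1/3}$ up to the harmless $K^{1/3}$ factor --- giving an extra term of order $n(K\log(n/\delta)/m)^{1/3}$, which dominates the last term once $m\ge K\log(n/\delta)$; this yields the first displayed regret bound. Substituting $m=n^3\log(n/\delta)$ makes this extra term $O(K^{1/3})$, a constant in $n$, so it is absorbed into \eqref{eq:regret_logistic}.

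Finally, for the computational cost I would account for the per-round operations of Algorithm~\ref{algorithm-logistic}: solving the strongly convex subproblem defining $\theta_{t+1}$ by (damped) Newton's method, using that $\nabla^2\ell_{x,y}$ has rank at most $K$; maintaining $A_{t-1}^{-1}$ and forming $\Phi(x_t)^\top A_{t-1}^{-1}\Phi(x_t)$ by rank-$\le K$ Woodbury updates; and one Cholesky factorization of the $K\times K$ covariance followed by $m$ Gaussian draws ($O(K^3+K^2m)$). Bookkeeping these gives total cost $O(nK^3d^2+K^2nm)$, which for $m=n^3\log(n/\delta)$ is $O(nK^3d^2+K^2n^4\log(n/\delta))$. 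The main obstacle I anticipate is the Monte Carlo bound: obtaining the cube-root exponent requires the \emph{multiplicative} concentration of $\hat p_t$ around $p_t$ (a plain additive Hoeffding bound is too weak) together with the $\mu/K$ floor provided by the smoothing, and one must make the concentration uniform over all $n$ rounds via the union bound while keeping the $\omega_i$'s independent of the internal state.
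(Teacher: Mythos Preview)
Your proposal is correct and follows essentially the same route as the paper: the identical three-term decomposition, Lemma~16 of \cite{foster2018logistic} for the smoothing error, a multiplicative Chernoff bound combined with the $\mu/K$ floor for the Monte Carlo error, and the same per-round cost bookkeeping. The paper's Monte Carlo step is a touch more direct---it applies Chernoff only to the single coordinate $y_t$ and plugs the floor $\bar p_{t,y_t}\ge\mu/K$ straight into the exponent, avoiding your case split and the union bound over the $K$ coordinates---but the resulting bounds and the optimal $\mu$ are the same up to constants.
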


\begin{proof} We first prove the computational complexity upper-bound. At each iteration $t =1,\dots,n$, the algorithm performs the following computations:
\begin{enumerate}[topsep=-5pt,itemsep=0pt,parsep=0pt,label={(\roman*)}]
    \item Update $A_t^{-1}$: since the rank of $A_t - A_{t-1}$ is $K$, the inverse $A_t^{-1}$ can be updated in $O(K^3d^2)$ operations.
    \item Update $\theta_{t+1}$: by Theorem 4 of \cite{jezequel2020efficient}, this can be done in $O(\log(n) + K^2d^2)$.
    \item Compute $\tilde y_t$: to do so, it must sample $m$ times from a Gaussian of dimension $K$. The cost is thus $O(mK^2)$. 
\end{enumerate}
Therefore, the overall time complexity of the algorithm is 
$
    O\big(nK^3 d^2 + K^2nm\big)
$.

We now prove the corresponding regret bound. First, we define
\[
    \smash{\textstyle{\tilde p_t := \operatorname{smooth}_\mu \Big(\frac{1}{m} \sum_{i=1}^m \sigma(\omega_i)\Big)
    \qquad 
    \text{and}
    \qquad
    \bar p_t := \E[\tilde p_t ] = \operatorname{smooth}_{\mu}\big(\E_{\theta \sim \tilde P_{t-1}}(\sigma(\theta^\top \Phi(x_t))\big) \,.}}
\] 
With this notation, $\tilde y_t = \sigma^+(\tilde p_t)$ and $\bar y_t = \sigma^+(\bar p_t)$. Using that $\sigma(\sigma^+(p)) = p$ for any $p \in \Delta_K$ yields for all $y_t \in [K]$ \vspace*{-5pt}
\[
    \qquad  \ell(\tilde y_t,y_t) = - \log\big(\sigma(\tilde y_t)_{y_t}\big) = - \log\big(\sigma( \sigma^+(\tilde p_t) )_{y_t} \big) = -\log(\tilde p_{t,y_t}) \,,
\]
where $\tilde p_{t,i}$ denotes the $i$-th component of $\tilde p_t$. Similarly, $\ell(\bar y_t,y_t) = - \log(\bar p_{t,y_t})$. Therefore, fixing some $\epsilon >0$, we have 
    \[ 
    \textstyle{\mathbb{P}\big[\ell(\tilde y_t,y_t) - \ell(\bar y_t, y_t) > \varepsilon\big] = \mathbb{P}\left[-\log\left(\frac{\tilde p_{t,y_t}}{\bar p_{t,y_t}} \right) > \varepsilon\right] = \mathbb{P}\left[\tilde p_{t,y_t} -\bar p_{t,y_t} < (e^{-\varepsilon} - 1) \bar p_{t,y_t} \right]  }\,.
    \]

    Using that $e^{-\varepsilon} - 1 \le 1 - 2\varepsilon$ for $\varepsilon \in (0,\frac{1}{2})$ together with the multiplicative Chernoff's bound (see e.g., Theorem 1.10.5 of \cite{doerr2020probabilistic}), entails for all $0 < \varepsilon < \frac{1}{2}$,
    \begin{equation}
        \label{eq:crude}
        \mathbb{P}\left[\ell(\tilde y_t,y_t) - \ell(\bar y_t, y_t) > \varepsilon\right]  \le \mathbb{P}\left[\tilde p_{t,y_t}  < (1-2\varepsilon) \bar p_{t,y_t} \right] \le e^{-2\varepsilon^2 m \bar p_{t,y_t}} \le e^{-\varepsilon^2m\mu/K} \,,
    \end{equation}
    where the last inequality is because $\operatorname{smooth}_\mu(p)_i \ge \mu/K$ for all $i \in [K]$.
    Taking $\varepsilon = \sqrt{\frac{K}{\mu m} \log(n/\delta)}$, we have \vspace*{-5pt}
    \begin{equation}
        \textstyle{\mathbb{P}\left[\ell(\tilde y_t,y_t) - \ell(\bar y_t, y_t) > \sqrt{\frac{K}{\mu m} \log\left(\frac{n}{\delta}\right)} \right] \le \frac{\delta}{n} }\,.
        \label{eq:proba}
    \end{equation}
    Furthermore, denoting $\hat p_t = \E_{\theta \sim \tilde P_{t-1}}(\sigma(\theta^\top \Phi(x_t))$ so that $\bar p_t = \text{smooth}_{\mu}(\hat p_t)$ and following Lemma~16 of \cite{foster2018logistic}, we get \vspace*{-5pt}
    \begin{multline}
        \ell(\bar y_t, y_t) - \ell(\hat y_t, y_t) = \log\Big(\frac{ \hat p_{t,y_t}}{\bar p_{t,y_t}}\Big) =  \log\Big(\frac{ \hat p_{t,y_t}}{\text{smooth}_{\mu}(\hat p_{t})_{y_t}}\Big) = \log\Big(\frac{ \hat p_{t,y_t}}{(1-\mu)\hat p_{t,y_t} + \frac{\mu}{K}}\Big)  \\ =  \log\Big(\frac{ 1}{1-\mu \big(1 - \frac{1}{K\hat p_{t,y_t}} \big)}\Big) \leq 2 \mu \big(1 - \frac{1}{K\hat p_{t,y_t}} \big) \leq 2 \mu \,. \label{eq:baryapprox}
    \end{multline}
    Combining~\eqref{eq:proba} and~\eqref{eq:baryapprox}, and using a union bound yields with probability at least $1-\delta$ \vspace*{-5pt}
    \[
       \textstyle{ \sum_{t=1}^n \ell(\tilde y_t,y_t) - \ell(\hat y_t,y_t) \leq n \sqrt{\frac{K}{\mu m} \log\left(\frac{n}{\delta}\right)} + \mu n } \,.
    \]
    Optimizing $\smash{\mu = (\log(n/\delta)K/m)^{1/3}}$ and using Corollary~\ref{cor:logistic} concludes the proof. 
\end{proof}

Using Gaussian distribution improves significantly the computation time with respect to log-concave distribution as considered by~\cite{foster2018logistic}. Indeed, it allows to get exact and efficient samples from $\smash{\tilde P_{t-1}}$ which leads to a computation time of order $O(n^4)$. 
On the contrary, since it is not possible to draw an exact sample from a log-concave distribution, \cite{foster2018logistic} must resort to expensive random walks to approximately sample from it. Using the method from \cite{bubeck2018sampling}, as suggested by \cite{foster2018logistic}, leads to a $O(B^6 n^{24}(Bn+dK)^{12})$ computation time per iteration (see their Example 3 with the choices $\varepsilon = n^{-2}$ and $\mu = n^{-1}$).

Note that our analysis shows a clean trade-off between the computational time and the regret bound. One could set a smaller value for $m$ at the price of a larger regret. 
In particular, it is worth pointing out that our analysis consider the worst case scenario. In the experiments that we considered, it seems that much less samples are sufficient to reach a good accuracy. 
This is partly because the lower-bound $\bar p_{t,y_t} \geq \mu \approx 1/n$ in Inequality~\eqref{eq:crude} may be very coarse for many $t$. If the $\bar p_{t,y_t}$ were of order $\smash{\Omega(1)}$, one could choose $m = O(n^2)$ instead of $m = O(n^3)$. Another possible explanation comes from the fact that, if $A_t$ is large enough, the variance of the samples is small and the convergence is much faster. In our experiments, the choice of $m=100$ already provides a good approximation. We leave to future work the possibility to improve the complexity in favorable scenarios.

\paragraph{Comparison with~\cite{jezequel2020efficient}} 
In the binary case, a more efficient algorithm called AIOLI has been introduced in \cite{jezequel2020efficient}. The latter achieves a similar regret bound as \namealgo{}, while having a  computational complexity of only $O(n(d^2 + \log(n)))$. This was possible because AIOLI does not rely on Monte Carlo sampling at all and uses only convex optimization instead. Therefore, one may wonder if it is possible to extend directly AIOLI to the multiclass setting avoiding Monte Carlo methods used in this work. In fact, we first tried to analyse the regret of natural extensions of AIOLI but we found the following inherent difficulties.  On the intuitive side, AIOLI was based on the observation that when $\theta_t$ is far from $0$ (let say $\theta_t \gg 0$) either $y_t = 1$ and the curvature was advantageous or $y_t = -1$ and $\smash{\hat \theta_t}$ tends to the oracle $\theta_{t+1}$. This intuition is a bit lost in the multiclass setting as several oracles are possible if $y_t \neq 1$. On a more technical side, the analysis in \cite{jezequel2020efficient} crucially relies on the relation $\smash{g_t^{-y_t} = -(1 + BR) \eta_t g_t}$ (Equation 20) which seems to have no equivalent in the multiclass setting. Thus, it remains an open question if an extension of AIOLI to the multiclass setting is possible.

\section{Experiments}

\label{sec:experiments}

Although \namealgo{} is primarily theoretically motivated in a worst-case analysis, here we study its performance on real data sets. 
We consider three datasets  (vehicle, shuttle, and segmentation taken from LIBSVM Data \footnote{\url{https://www.csie.ntu.edu.tw/~cjlin/libsvmtools/datasets/multiclass.html}}) and compare the performance of \namealgo{} with two well-used algorithms: Online Gradient Descent (OGD) \citep{zinkevich2003online} and Online Newton Step (ONS) \citep{hazan2007logarithmic}. The algorithm of \cite{foster2018logistic} is not considered because of prohibitive computational complexity. Concerning the hyper-parameters, the values suggested by the theory are generally too conservative. We thus choose the best ones in a grid for each algorithm ($\lambda, \beta \in [0.01,0.03,0.1,0.3,1,3,10]$).

The averaged losses over time are reported in Figure \ref{fig:real_datasets_plot}. We can remark that the performance of \namealgo{} is similar to the one of ONS when the number of samples is high. However, the learning of \namealgo{} seems more stable than ONS which leads sometimes (vehicle and segment) to better performance when there are few samples. This is not surprising as aggregating forecasters are hedging against worst-case scenario. Those results show that \namealgo{} is not only a theoretical oriented algorithm but could also be used successfully in practice. However, we expect it to perform best in a hard adversarial regime (close to the one described in \cite{hazan2014logistic}).

\begin{figure}
     \centering
     \begin{subfigure}[b]{0.32\textwidth}
         \centering
         \includegraphics[height=3cm]{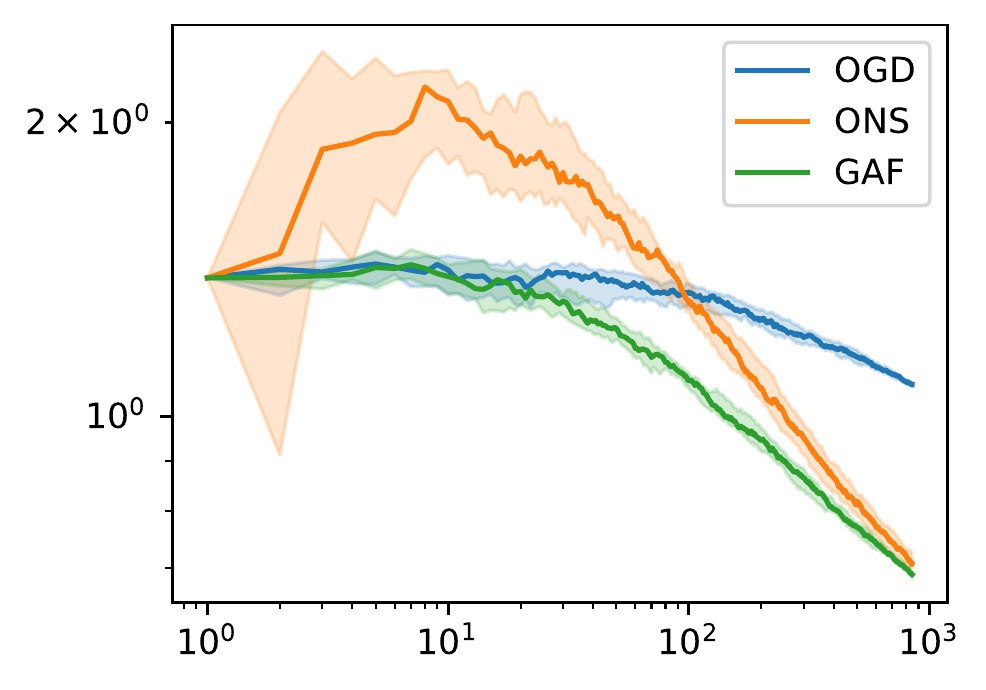}
         \caption{Vehicle}
         \label{fig:y equals x}
     \end{subfigure}
     \begin{subfigure}[b]{0.32\textwidth}
         \centering
         \includegraphics[height=3cm]{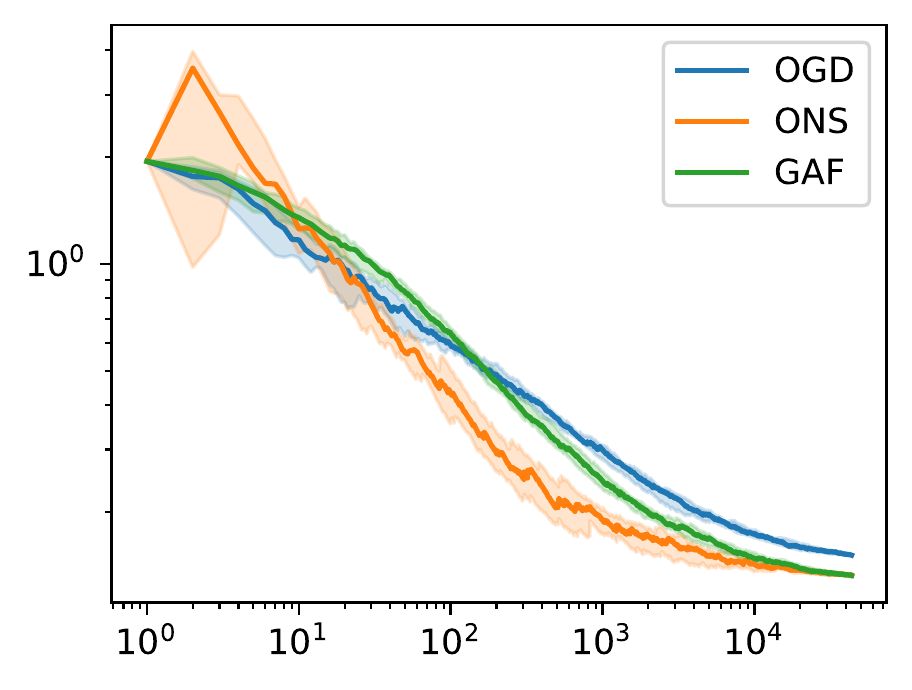}
         \caption{Shuttle}
         \label{fig:three sin x}
     \end{subfigure}
     \begin{subfigure}[b]{0.32\textwidth}
         \centering
         \includegraphics[height=3cm]{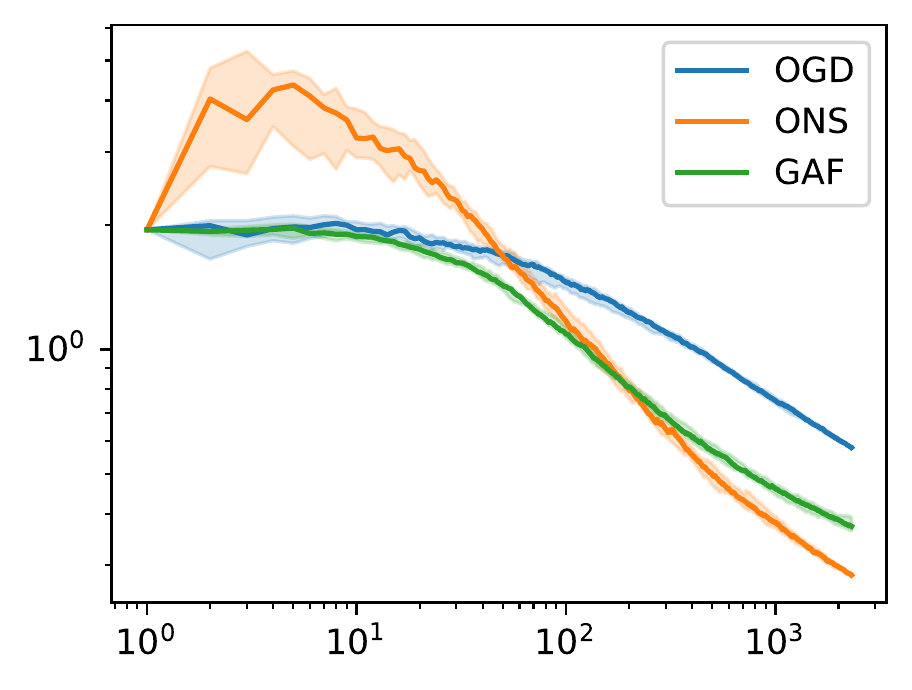}
         \caption{Segment}
         \label{fig:five over x}
     \end{subfigure}
        \caption{Averaged losses over time incurred by ONS, GAF, and OGD. The experiments were repeated 20 times and the empirical quantiles $0.25, 0.5$, and $0.75$ are reported.}
        \label{fig:real_datasets_plot}
\end{figure}

\section*{Conclusion and future work}
We have shown how to leverage both mixability and quadratic approximations to design an algorithm \namealgo{} which achieves high statistical performance while being more efficient than existing algorithms. In particular, it achieves a new trade-off for multiclass logistic regression. 

Some interesting questions are still remaining and left for future work. The linear dependence in $B$ although better than what can be achieved by any proper algorithm is still sub-optimal compared to \cite{foster2018logistic} (logarithmic dependence). It may be possible to improve it by using other surrogates than quadratics. The essential point would be to prove an equivalent of Lemma~\ref{lem:lower_bound} for those surrogates. The computational complexity of computing Equation \eqref{eq:algo_logistic} should also remain low. The computational complexity may also be improved. Like previous point, other surrogates could be used to make Equation~\eqref{eq:algo_logistic} easier to compute. Finally, we believe that \namealgo{} should easily be extended to kernels. One would need to adapt the analysis to depend on the effective dimension \citep{jezequel2019efficient} of the RKHS instead of $d$. It would of interest to see how classical approximation algorithms, like Nystrom and random Fourier features, deteriorate statistical performance. Finally, we have only provided two examples of loss functions (multi-class logistic loss and squared loss).  An intriguing question would be to see if the algorithm can be used to improve existing results for other losses such as Huber loss.



\bibliography{biblio}

\begin{thebibliography}{}

\bibitem[Azoury and Warmuth, 2001]{azoury2001relative}
Azoury, K.~S. and Warmuth, M.~K. (2001).
\newblock Relative loss bounds for on-line density estimation with the
  exponential family of distributions.
\newblock {\em Machine Learning}, 43(3):211--246.

\bibitem[Bach et~al., 2010]{bach2010self}
Bach, F. et~al. (2010).
\newblock Self-concordant analysis for logistic regression.
\newblock {\em Electronic Journal of Statistics}, 4:384--414.

\bibitem[Bubeck et~al., 2018]{bubeck2018sampling}
Bubeck, S., Eldan, R., and Lehec, J. (2018).
\newblock Sampling from a log-concave distribution with projected langevin
  monte carlo.
\newblock {\em Discrete \& Computational Geometry}, 59(4):757--783.

\bibitem[Cesa-Bianchi and Lugosi, 2006]{cesa2006prediction}
Cesa-Bianchi, N. and Lugosi, G. (2006).
\newblock {\em Prediction, learning, and games}.
\newblock Cambridge university press.

\bibitem[Doerr, 2020]{doerr2020probabilistic}
Doerr, B. (2020).
\newblock Probabilistic tools for the analysis of randomized optimization
  heuristics.
\newblock In {\em Theory of Evolutionary Computation}, pages 1--87. Springer.

\bibitem[Foster et~al., 2018]{foster2018logistic}
Foster, D.~J., Kale, S., Luo, H., Mohri, M., and Sridharan, K. (2018).
\newblock Logistic regression: The importance of being improper.
\newblock {\em arXiv preprint arXiv:1803.09349}.

\bibitem[Hazan et~al., 2007]{hazan2007logarithmic}
Hazan, E., Agarwal, A., and Kale, S. (2007).
\newblock Logarithmic regret algorithms for online convex optimization.
\newblock {\em Machine Learning}, 69(2-3):169--192.

\bibitem[Hazan et~al., 2016]{hazan2016introduction}
Hazan, E. et~al. (2016).
\newblock Introduction to online convex optimization.
\newblock {\em Foundations and Trends{\textregistered} in Optimization},
  2(3-4):157--325.

\bibitem[Hazan et~al., 2014]{hazan2014logistic}
Hazan, E., Koren, T., and Levy, K.~Y. (2014).
\newblock Logistic regression: Tight bounds for stochastic and online
  optimization.
\newblock In {\em Conference on Learning Theory}, pages 197--209. PMLR.

\bibitem[Helmbold and Warmuth, 1995]{helmbold1995weak}
Helmbold, D.~P. and Warmuth, M.~K. (1995).
\newblock On weak learning.
\newblock {\em Journal of Computer and System Sciences}, 50(3):551--573.

\bibitem[J{\'e}z{\'e}quel et~al., 2019]{jezequel2019efficient}
J{\'e}z{\'e}quel, R., Gaillard, P., and Rudi, A. (2019).
\newblock Efficient online learning with kernels for adversarial large scale
  problems.
\newblock {\em arXiv preprint arXiv:1902.09917}.

\bibitem[J{\'e}z{\'e}quel et~al., 2020]{jezequel2020efficient}
J{\'e}z{\'e}quel, R., Gaillard, P., and Rudi, A. (2020).
\newblock Efficient improper learning for online logistic regression.
\newblock In {\em Conference on Learning Theory}, pages 2085--2108. PMLR.

\bibitem[Kakade and Ng, 2008]{kakade2008online}
Kakade, S. and Ng, A.~Y. (2008).
\newblock Online bounds for bayesian algorithms.

\bibitem[Marteau-Ferey et~al., 2019]{marteau2019beyond}
Marteau-Ferey, U., Ostrovskii, D., Bach, F., and Rudi, A. (2019).
\newblock Beyond least-squares: Fast rates for regularized empirical risk
  minimization through self-concordance.
\newblock {\em arXiv preprint arXiv:1902.03046}.

\bibitem[Mourtada and Ga{\"\i}ffas, 2019]{mourtada2019improper}
Mourtada, J. and Ga{\"\i}ffas, S. (2019).
\newblock An improper estimator with optimal excess risk in misspecified
  density estimation and logistic regression.
\newblock {\em arXiv preprint arXiv:1912.10784}.

\bibitem[Ostrovskii et~al., 2021]{ostrovskii2021finite}
Ostrovskii, D.~M., Bach, F., et~al. (2021).
\newblock Finite-sample analysis of $ m $-estimators using self-concordance.
\newblock {\em Electronic Journal of Statistics}, 15(1):326--391.

\bibitem[Van~Erven et~al., 2015]{van2015fast}
Van~Erven, T., Grunwald, P., Mehta, N.~A., Reid, M., Williamson, R., et~al.
  (2015).
\newblock Fast rates in statistical and online learning.

\bibitem[Vovk, 2001]{vovk2001competitive}
Vovk, V. (2001).
\newblock Competitive on-line statistics.
\newblock {\em International Statistical Review}, 69(2):213--248.

\bibitem[Zinkevich, 2003]{zinkevich2003online}
Zinkevich, M. (2003).
\newblock Online convex programming and generalized infinitesimal gradient
  ascent.
\newblock In {\em Proceedings of the 20th international conference on machine
  learning (icml-03)}, pages 928--936.

\end{thebibliography}
\bibliographystyle{apalike}  

\newpage

\appendix

\begin{center}
    \huge Supplementary material
\end{center}
\vspace*{1cm}

\section{Notations and relevant equations}

In this section, we give notations and useful identities which will be used in following proofs.
At each forecasting instance $t \ge 1$, the learner is given an input $x_t \in \cX$;  forms a  prediction $\smash{\hat y_t \in \hat \cY}$. Then, the environment chooses $y_t \in \cY$; reveals it to the learner which incurs the loss $\ell(\hat y_t, y_t)$. The regret is defined as
\[ 
R_n(\theta) = \sum_{t=1}^n \ell(\hat y_t, y_t) -  \sum_{t=1}^n \ell_{x_t,y_t}(\theta) \,,
\]
with respect to all $\theta \in \Theta \subset \R^d$. Here, $\ell_{x,y}(\theta) = \ell(f_\theta(x),y)$ where $\mathcal{F} = \{f_\theta: \cX \to \hat \cY\}$ is the reference class of functions.
We assume that, for all $(x,y) \in \cX \times \cY$,  the losses $\ell_{x,y}$ are convex, $\mathcal{C}^2$ and satisfies Assumptions~\ref{ass:mixability}-\ref{ass:smooth} with parameters $\alpha, \zeta, \beta, \gamma > 0$. We also use the following notations:
\begin{itemize}
    \item $\ell_t(\theta) = \ell_{x_t, y_t}(\theta)$
    \item $\theta_t = \argmin{\theta \in \R^{d}} \left\{ \sum_{s=1}^{t-2} \tilde \ell_s(\theta) + \ell_{t-1}(\theta) + \lambda \|\theta\|_2^2 \right\} $
    \item $\tilde \ell_{t-1}(\theta) = \ell_{t-1}(\theta_t) + \nabla \ell_{t-1}(\theta_t)^\top (\theta - \theta_t) + \frac{\beta}{2} (\theta - \theta_t)^\top \nabla^2 \ell_{t-1} (\theta_t) (\theta - \theta_t)$
    \item $L_t(\theta) = \sum_{s=1}^t \ell_s(\theta) + \lambda \|\theta\|^2$ and $\tilde L_t(\theta) = \sum_{s=1}^t \tilde \ell_s(\theta) + \lambda \|\theta\|^2$
    \item $\tilde P_{t-1}(\theta) = \frac{e^{-\alpha \tilde L_{t-1}(\theta)}}{\int_{\R^{d}} e^{-\alpha \tilde L_{t-1}(\theta)} d\theta}$
    \item $A_t = \sum_{s=1}^t \frac{\beta}{2} \nabla^2 \ell_s(\theta_{s+1}) + \lambda I$ \,.
\end{itemize}
\section{Main proof}
\label{app:main_proof}

\begin{proof}[Proof of Theorem 1]
    Let $t\geq 1$. By definition, the prediction $\hat y_t$ (see Equation~\eqref{eq:def_hz}) satisfies the Mixability property~\ref{ass:mixability}. Applied in $y = y_t$, it yields
    \begin{align}
        \ell(\hat y_t, y_t) 
        & \le - \frac{1}{\alpha} \log \left( \E_{\theta \sim \tilde P_{t-1}(\theta)} e^{-\alpha \ell_{t}(\theta)} \right), \quad \text{where} \quad \tilde P_{t-1}(\theta) = \frac{e^{-\alpha \tilde L_{t-1}(\theta)}}{\int_{\R^{d}} e^{-\alpha \tilde L_{t-1}(\theta)} d\theta} \nonumber \\
        & = -\frac{1}{\alpha} \log\left( \frac{\int_{\R^d} e^{-\alpha (\ell_t(\theta) - \tilde \ell_t(\theta)) - \alpha \tilde L_t(\theta)} d\theta}{\int_{\R^d} e^{-\alpha \tilde L_{t-1}(\theta)} d\theta} \right) \nonumber \\ 
        &= -\frac{1}{\alpha}\log\left( \E_{\theta \sim \tilde P_t} e^{-\alpha [\ell_t(\theta) - \tilde \ell_t(\theta)]} \right) + \frac{1}{\alpha}\log\left(\frac{\int_{\R^d} e^{-\alpha \tilde L_{t-1}(\theta)} d\theta}{\int_{\R^d} e^{-\alpha \tilde L_{t}(\theta)} d\theta}\right) \label{eq:firstbound} \,.
    \end{align}
    We recall that $ \tilde L_t: \theta \mapsto \sum_{s=1}^t \tilde \ell_s(\theta) + \lambda I$ where for all $s \geq 1$
    \[
        \tilde \ell_{s}(\theta) = \ell_{s}(\theta_{s+1}) + \nabla \ell_{s}(\theta_{s+1})^\top (\theta - \theta_{s+1}) + \frac{\beta}{2} (\theta - \theta_{s+1})^\top \nabla^2 \ell_{s} (\theta_{s+1}) (\theta - \theta_{s+1}).
    \]
    Thus, $\tilde L_t$ is a quadratic function with Hessian $2 A_t$ where 
    \begin{equation}
        A_t = \sum_{s=1}^t \frac{\beta}{2} \nabla^2 \ell_s(\theta_{s+1}) + \lambda I \,.
        \label{eq:def_At}
    \end{equation}
    Moreover, by definition~\eqref{eq:def_thetat} of $\theta_{t+1}$ and since $0 = \nabla \tilde L_{t-1}(\theta_{t+1}) + \nabla \ell_t(\theta_{t+1}) = \nabla \tilde L_t(\theta_{t+1})$, we have
    \begin{equation}
        \theta_{t+1} = \argmin{\theta \in \R^{d}} \tilde L_t(\theta) \,.
        \label{eq:thetat2}
    \end{equation}
    Therefore,
    \[
        \tilde L_t(\theta) = (\theta - \theta_{t+1})^\top A_t (\theta - \theta_{t+1}) + \tilde L_t(\theta_{t+1})\,,
    \]
    and recognizing the integral of a multivariate Gaussian distribution $\tilde P_t \sim \mathcal{N}\big(\theta_{t+1}, \frac{1}{2\alpha} A_t^{-1}\big)$, we get
    \[
        \int_{\R^{d}} e^{-\alpha \tilde L_t(\theta)}d\theta = e^{-\alpha \tilde L_t(\theta_{t+1})} \int_{\R^{d}}  e^{- \alpha (\theta - \theta_{t+1})^\top A_t (\theta - \theta_{t+1})}d\theta = \sqrt{ (\pi/\alpha)^d |A_t^{-1}|} e^{-\alpha \tilde L_t(\theta_{t+1})} \,,
    \]
    which substituted into~\eqref{eq:firstbound} gives
    \begin{equation*}
    \ell(\hat y_t, y_t) + \tilde L_{t-1}(\theta_t) - \tilde L_{t}(\theta_{t+1})
        \le \underbrace{-\frac{1}{\alpha}\log\left(\mathbb{E}_{\theta \sim \tilde P_t} e^{-\alpha(\ell_t(\theta) - \tilde \ell_t(\theta))} \right)}_{\Omega_t} + \frac{1}{2\alpha} \log\left( \frac{|A_t|}{|A_{t-1}|}\right) .
    \end{equation*}
    Summing over $t$ and using $\tilde L_0(\theta_1) = 0$, the sum telescopes,
    \begin{equation*}
        \sum_{t=1}^n \ell(\hat y_t, y_t) - \tilde L_n(\theta_{n+1}) \le \sum_{t=1}^n \Omega_t + \frac{1}{2\alpha} \log\left( \frac{|A_n|}{|A_0|} \right) .
    \end{equation*}
    Denote $L_n(\theta) = \sum_{t=1}^n \ell_t(\theta) + \lambda \|\theta\|^2$. By Assumption~\ref{ass:quadratic_lower_bound} followed by~\eqref{eq:thetat2}, for all $\theta \in \R^d$ 
    \[
        L_n(\theta) \quad \stackrel{\ref{ass:quadratic_lower_bound}}{\ge} \quad \tilde L_n(\theta) \quad  \stackrel{\eqref{eq:thetat2}}{\ge} \quad  \tilde L_n(\theta_{n+1}) \,.
    \] 
    Therefore, plugging into the previous inequality, the regret can be bounded as
    \begin{equation}
    \label{eq:first_upper_bound_regret}
        R_n(\theta) = \sum_{t=1}^n  \ell(\hat y_t, y_t) - L_n(\theta)  + \lambda \|\theta\|_2^2 \le \lambda \|\theta\|_2^2 + \sum_{t=1}^n \Omega_t + \frac{1}{2\alpha} \log\left( \frac{|A_n|}{|\lambda I|} \right) .
    \end{equation}
    Now it remains to bound the approximation terms $\Omega_t$. Using Jensen's inequality and the concavity of $\log$, yields
    \begin{align*}
        \Omega_t & := -\frac{1}{\alpha}\log\left(\mathbb{E}_{\theta \sim \tilde P_t} e^{-\alpha(\ell_t(\theta) - \tilde \ell_t(\theta))} \right) \\
             & \le \mathbb{E}_{\theta \sim \tilde P_t} \big[\ell_t(\theta) - \tilde \ell_t(\theta) \big] \\
            & = \mathbb{E}_{\theta \sim \tilde P_t} \Big[\ell_t(\theta) - \ell_{t}(\theta_{t+1}) - \nabla \ell_{t}(\theta_{t+1})^\top (\theta - \theta_{t+1}) - \frac{\beta}{2} (\theta - \theta_{t+1})^\top \nabla^2 \ell_{t} (\theta_{t+1}) (\theta - \theta_{t+1})\Big] \\
            & \le \mathbb{E}_{\theta \sim \tilde P_t} \Big[\ell_t(\theta) - \ell_{t}(\theta_{t+1}) - \nabla \ell_{t}(\theta_{t+1})^\top (\theta - \theta_{t+1}) \Big] \,.
    \end{align*}
    By Assumption~\ref{ass:selfconcordance} and Cauchy-Schwartz inequality,
    \begin{align}
    \label{eq:cauchy_bound}
        \Omega_t
        &\le \mathbb{E}_{\theta \sim \tilde P_t} \left[e^{\zeta \|\theta - \theta_{t+1}\|^2} \|\theta - \theta_{t+1}\|^2_{\nabla^2 \ell_t(\theta_{t+1})}\right]  \nonumber \\
        &\le  \underbrace{\sqrt{\mathbb{E}_{\theta \sim \tilde P_t} e^{2 \zeta \|\theta - \theta_{t+1}\|^2}}}_{\Omega_{t,1}} \underbrace{\sqrt{ \mathbb{E}_{\theta \sim \tilde P_t}  \|\theta - \theta_{t+1} \|^4_{\nabla^2 \ell_t(\theta_{t+1})}}}_{ \sqrt{\Omega_{t,2}}} \,.
    \end{align}
    Now remarking that $\tilde P_t = \Normal\big(\theta_{t+1},\frac{1}{2\alpha}A_t^{-1}\big)$, let us bound $\Omega_{t,1}$ the term on the left of the product. There exists an orthonormal basis $e_1,\dots,e_d$ in $\R^d$ such that $\theta - \theta_{t+1}$ follows the same distribution as
    \[
        \sum_{i=1}^d \sqrt{\frac{1}{2\alpha} \lambda_i(A_t^{-1})} X_i  e_i \qquad \text{where} \quad X_i \stackrel{\text{i.i.d.}}{\sim} \mathcal{N}(0,1), i=1,\dots,d \,,
    \]
    and $\lambda_i(A_t^{-1})$ denotes the $i$-th largest eigenvalue of $A_t^{-1}$. Thus, since $\lambda_i(A_t^{-1}) \leq \lambda^{-1}$,
    \begin{multline*}
        \Omega_{t,1} = \sqrt{\mathbb{E}_{\theta \sim \tilde P_t}\big[ e^{2\zeta \|\theta - \theta_{t+1}\|^2}\big]}
        \le \sqrt{\prod_{i=1}^d \mathbb{E}\Big[ e^{\frac{\zeta}{\alpha} \lambda_i(A_t^{-1})X_i^2} \Big]} \le \sqrt{\prod_{i=1}^d \mathbb{E}\Big[ e^{\frac{\zeta}{\alpha\lambda}X_i^2} \Big]} 
        \ifthenelse{\boolean{neurips_version}}{\\}{}
        = \left(\mathbb{E}_{X \sim \chi^2}\Big[ e^{\frac{\zeta}{ \alpha \lambda}X}\Big]\right)^{d/2} \nonumber 
    \end{multline*}
    Then, because $\lambda \ge 4  \zeta \alpha^{-1}$ by assumption and using that the moment-generating function of the $\chi^2$ distribution (with one degree of freedom) is  $\E_{X \sim \chi^2} \big[\exp(tX)\big] =(1-2t)^{-1/2}$ for $t < 1/2$ and thus $\E_{X \sim \chi^2}[\exp(X/4)] = \sqrt{2}$, the term can be further upper-bounded as
    \begin{align}
    \Omega_{t,1} 
         \le \mathbb{E}_{X \sim \chi^2}\Big[ e^{\frac{\zeta}{ \alpha \lambda}X}\Big]^{d/2} 
            & \le   \mathbb{E}_{X \sim \chi^2}\Big[ e^{\frac{1}{ 4}X}\Big]^{\frac{2 d \zeta }{\lambda \alpha}}  
        \qquad \leftarrow \qquad \text{Jensen's inequality} \nonumber \\
            & \leq 2^{\frac{d \zeta }{\lambda \alpha}} \nonumber \\
            & \leq 2  \hspace*{3cm} \leftarrow \qquad \text{since} \quad \lambda \geq d \zeta \alpha^{-1} \label{eq:left_term_bound} .
    \end{align}
    We now upper-bound $\Omega_{t,2}$ in~\eqref{eq:cauchy_bound},
    \begin{equation*}
        \Omega_{t,2} := \mathbb{E}_{\theta \sim \tilde P_t}  \|\theta - \theta_{t+1} \|^4_{\nabla^2 \ell_t(\theta_{t+1})}
        = \mathbb{E}_{\theta \sim \Normal\big(0,\frac{1}{2\alpha}A_t^{-1}\big)} (\theta^\top \nabla^2 \ell_t(\theta_{t+1}) \theta)^2 \\
        = \mathbb{E}_{\theta \sim \Normal(0,\Sigma_t)} \|\theta\|^4 
    \end{equation*}
    where $\Sigma_t = \frac{1}{2\alpha}(\nabla^2 \ell_t(\theta_{t+1}))^{1/2} A_t^{-1} (\nabla^2 \ell_t(\theta_{t+1}))^{1/2}$.
    If we write $\lambda_i$ the $i$-th largest eigenvalue  of $\Sigma_t$, there exists an orthonormal basis $e_1,\dots,e_d$ such that
    \begin{multline*}
        \Omega_{t,2} = \mathbb{E}_{\theta \sim \Normal(0,\Sigma_t)} \big[ \|\theta\|^4\big]
        = \mathbb{E}_{(X_i) \overset{iid}{\sim} \Normal(0,1)} \left[ \bigg\|\sum_{i=1}^d \sqrt{\lambda_i} X_i e_i\bigg\|^4 \right] \\
        = \mathbb{E}_{(X_i) \overset{iid}{\sim} \Normal(0,1)} \left[ \bigg(\sum_{i=1}^d \lambda_i X_i^2\bigg)^2 \right] 
        = \sum_{i=1}^d \sum_{j=1}^d \lambda_i \lambda_j \mathbb{E}_{(X_i) \overset{iid}{\sim} \Normal(0,1)} \big[ X_i^2 X_j^2\big] \,.
    \end{multline*}
    Then remarking that $\mathbb{E}_{(X_i) \overset{iid}{\sim} \Normal(0,1)}[X_i^2 X_j^2]$ equals to $3$ if $i=j$ and $1$ otherwise, we get the following upper-bound
    \begin{multline*}
        \Omega_{t,2}
        \le 3 \sum_{i,j} \lambda_i \lambda_j
        = 3 \left(\sum_{i=1}^d \lambda_i \right)^2 
        = 3 \Tr(\Sigma_t)^2 = 3 \Tr\Big(\frac{1}{2\alpha}A_t^{-1} \nabla^2 \ell_t(\theta_{t+1})\Big)^2 \\
        = \frac{3}{\alpha^2\beta^2} \Tr\Big(A_t^{-1} \frac{\beta}{2} \nabla^2 \ell_t(\theta_{t+1})\Big)^2 \,.
    \end{multline*}
    Then, by Lemma~\ref{lem:upper_bound_trace},
    \begin{equation}
        \Omega_{t,2}
        \le \frac{3}{\alpha^2\beta^2} \log\bigg(\frac{|A_t|}{\big|A_t - \frac{\beta}{2} \nabla^2 \ell_t(\theta_{t+1})\big|}\bigg)^2 = \frac{3}{\alpha^2 \beta^2} \log\bigg(\frac{|A_t|}{|A_{t-1}|}\bigg)^2 \,.
        \label{eq:right_term_bound}
    \end{equation}
    
    Then combining equations \eqref{eq:cauchy_bound}, \eqref{eq:left_term_bound} and \eqref{eq:right_term_bound}, we have
    \begin{equation}
        \Omega_t \le \frac{2 \sqrt{3}}{\alpha \beta} \log\bigg(\frac{|A_t|}{|A_{t-1}|}\bigg)
    \end{equation}
    which, by summing over $t=1,\dots,n$, telescopes
    \begin{equation*}
        \sum_{t=1}^n \Omega_t \le  \frac{2\sqrt{3}}{\alpha \beta} \log \left( \frac{|A_n|}{|A_0|} \right) \,.
    \end{equation*}
    Combining this upper bound with equation \eqref{eq:first_upper_bound_regret} yields
    \begin{equation}
        R_n(\theta) \le \lambda \|\theta\|^2 + \frac{1}{\alpha} \Big(\frac{1}{2} +  \frac{2\sqrt{3}}{\beta}  \Big) \log \left( \frac{|A_n|}{|\lambda I|} \right) ,
    \end{equation}
    which concludes the proof since by~\eqref{eq:def_At}  and Assumption~\ref{ass:smooth}
    \[
     |A_n|   \stackrel{\eqref{eq:def_At}}{=} \bigg|  \lambda I +  \frac{\beta}{2} \sum_{t=1}^n \nabla^2 \ell_t(\theta_{t+1})   \bigg| 
     \stackrel{\ref{ass:smooth}}{\leq}   \bigg|   \Big(\lambda + \frac{n \gamma \beta}{2}\Big) I \bigg| = \Big(1 + \frac{n \gamma \beta}{2 \lambda }  \Big)^d  |\lambda I| 
     \,.
    \]
    
    \end{proof}

\section{Technical Lemmas}

\label{sec:lemmas}

The following lemma shows that the logistic loss satisfies Assumption~\ref{ass:quadratic_lower_bound} with $\beta = (\log(K)/2 + BR + 1)^{-1}$. Indeed, it suffices to apply it with $a = \theta_1^\top \phi(x)$, $b = \theta_2^\top\phi(x)$ and $C = BR$. Indeed, one can check that
\begin{eqnarray*}
    \ell_y(a) & =  & \ell_{x,y}(\theta_1) \\ 
    \ell_y(b) & =  &\ell_{x,y}(\theta_2) \\
    \nabla \ell_{y}(b)^\top(a-b) 
        & = & \nabla \ell_{x,y}(\theta_2)^\top (\theta_1 - \theta_2) \\
    (a-b)^\top \nabla^2 \ell_y(b) (a-b)
        & = & 
        (\theta_1 - \theta_2)^\top \nabla^2 \ell_{x,y}(\theta_2) (\theta_1 - \theta_2) \,,
\end{eqnarray*}
where the terms on the left correspond to the notation of Lemma~\ref{lem:lower_bound} and the terms on the right to Assumption~\ref{ass:quadratic_lower_bound}.
\begin{lemma}
    \label{lem:lower_bound}
    Let $C > 0$, $a \in [-C,C]^K$, $y \in [K]$, and $b \in \R^K$. Denote $\smash{\ell_y(a) = -\log \big(\frac{e^{a_y}}{\sum_j e^{a_j}}\big)}$. Then,
    \[ \ell_y(a) \ge \ell_y(b) + \nabla \ell_y(b)^\top (a-b) + \frac{1}{\log(K)+2(C+1)} (a-b)^\top \nabla^2 \ell_y(b) (a-b) \,.
    \]
    \end{lemma}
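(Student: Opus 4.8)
I would start by writing $\ell_y(a)=-a_y+\Lambda(a)$ with $\Lambda(z):=\log\sum_j e^{z_j}$, so that $\nabla\ell_y(b)=-e_y+\sigma(b)$ and $\nabla^2\ell_y(b)=\operatorname{diag}(\sigma(b))-\sigma(b)\sigma(b)^\top$, which is independent of $y$. The term $-a_y$ is affine in $a$, hence contributes nothing to the Bregman divergence $D:=\ell_y(a)-\ell_y(b)-\nabla\ell_y(b)^\top(a-b)$; so, setting $p=\sigma(b)$, $q=\sigma(a)$, $w=a-b$, the claim reduces to a statement about $\Lambda$: one checks $D=\Lambda(a)-\Lambda(b)-p^\top w=\log\E_{i\sim p}[e^{w_i}]-\E_{i\sim p}[w_i]$ and $(a-b)^\top\nabla^2\ell_y(b)(a-b)=\operatorname{Var}_{i\sim p}(w_i)$. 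It thus suffices to prove $\log\E_{i\sim p}[e^{w_i}]-\E_{i\sim p}[w_i]\ge\frac{1}{\log K+2(C+1)}\operatorname{Var}_{i\sim p}(w_i)$.

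The plan is then to pick the change of variable that turns this into a one-dimensional inequality. Let $X$ be the random variable equal to $X_i:=\log(p_i/q_i)$ when $i\sim p$. Using $a_i-b_i=\log(q_i/p_i)+(\Lambda(a)-\Lambda(b))$, three facts fall out: $\E_{i\sim p}[X_i]=\operatorname{KL}(p\|q)=D$; $\operatorname{Var}_{i\sim p}(X_i)=\operatorname{Var}_{i\sim p}(w_i)$, since $X_i$ and $-w_i$ differ by a constant; and, crucially, $\E_{i\sim p}[e^{-X_i}]=\sum_i p_i\,q_i/p_i=\sum_i q_i=1$. Moreover $p_i\le1$ and, since $a\in[-C,C]^K$, $-\log q_i=-a_i+\Lambda(a)\le C+(\log K+C)=\log K+2C$, so $X_i\le\log K+2C=:B_0$ almost surely.

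It then remains to show: any random variable $X$ with $\E[e^{-X}]=1$ and $X\le B_0$ a.s. satisfies $\E[X]\ge\operatorname{Var}(X)/(B_0+2)$. For this I would establish the pointwise inequality $e^{-x}\ge 1-x+\frac{x^2}{B_0+2}$ for all $x\le B_0$ (any $B_0\ge0$): the function $f(x)=e^{-x}-1+x-\frac{x^2}{B_0+2}$ has $f(0)=f'(0)=0$, is decreasing on $(-\infty,0]$ hence nonnegative there, and on $[0,B_0]$ a short argument shows its minimum is at an endpoint, the binding check being $f(B_0)=e^{-B_0}+\frac{B_0-2}{B_0+2}\ge0$, valid for all $B_0\ge0$. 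Taking expectations at $x=X$ and using $\E[e^{-X}]=1$ gives $1\ge1-\E[X]+\E[X^2]/(B_0+2)$, i.e.\ $\E[X]\ge\E[X^2]/(B_0+2)\ge\operatorname{Var}(X)/(B_0+2)$. Plugging in $B_0=\log K+2C$ and translating back through the identifications of the second paragraph yields the lemma.

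The first two paragraphs are routine bookkeeping; the real content is the last step — designing the pointwise inequality with the \emph{sharp} constant $1/(B_0+2)$ (the constant $1/(B_0+1)$ already fails, since then $f(B_0)=e^{-B_0}-1/(B_0+1)<0$ for large $B_0$) and verifying it all the way up to $x=B_0$. The one genuinely structural ingredient is the identity $\E_{i\sim p}[e^{-X_i}]=1$, which is exactly what allows a first moment (the KL divergence) to control a second moment (the variance of $\log(p/q)$), the almost-sure upper bound on $X$ being the only place where the hypothesis $a\in[-C,C]^K$ enters.
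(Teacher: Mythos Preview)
Your proof is correct and takes a genuinely different route from the paper's. Both start by stripping off the linear part $-a_y$ to reduce to the log-sum-exp $\Lambda$, and both ultimately exploit an almost-sure upper bound $\log K+2C$ on a suitably chosen random variable under $p=\sigma(b)$. The paper, however, argues by monotonicity in $b$: it differentiates $\xi(a,b)=\Lambda(a)-\Lambda(b)-\nabla\Lambda(b)^\top(a-b)-\tfrac{\beta}{2}(a-b)^\top\nabla^2\Lambda(b)(a-b)$ with respect to $b$, computes $\nabla^3\Lambda$ explicitly, and reduces the claim to the third-moment inequality $\E[(Y-\E Y)^3]\le(2C+\log K)\,\operatorname{Var}(Y)$ for $Y=b_i-a_i$; this follows from $Y-\E Y\le 2C+\log K$ a.s., which the paper extracts from a softmax lemma $\sum_i p_i b_i\ge\max_i b_i-\log K$. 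Your approach bypasses third derivatives entirely: the change of variable $X_i=\log(p_i/q_i)$ turns the Bregman divergence into $\E_p[X]$ and supplies the identity $\E_p[e^{-X}]=1$, so a pointwise bound $e^{-x}\ge 1-x+x^2/(B_0+2)$ on $(-\infty,B_0]$ yields $\E[X]\ge\E[X^2]/(B_0+2)$ directly. Your route is more elementary (no trilinear-form computation) and the KL identity $\E_p[e^{-X}]=1$ is a clean structural observation; the paper's route makes the mechanism more transparent---the obstruction is literally the third central moment---and its final step $Y^3\le MY^2$ is simpler to verify than your exponential inequality, which needs the convex--concave split and the endpoint check $e^{-B_0}+(B_0-2)/(B_0+2)\ge 0$. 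Both arrive at exactly the same constant $\log K+2(C+1)$.
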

    \begin{proof}
    We start the proof by rephrasing our objective. 
    Noting that $\smash{\ell_y(a) = -a^\top e_y + \log(\sum_{j=1}^K e^{a_j})}$, one can subtract the linear part on both sides of the inequality. Thus, it suffices to prove the inequality for the function $\smash{f : a \mapsto \log(\sum_{j=1}^K e^{a_j})}$. Defining
    \begin{equation}
        \xi(a,b) = f(a) - f(b) - \nabla f(b)^\top (a-b) - \frac{\beta}{2} (a-b)^\top \nabla^2 f(b) (a-b) ,
        \label{eq:def_xi}
    \end{equation}
    with $\beta =(\log(K)/2 + C+1)^{-1}$, it is thus enough to prove that $\xi(a,b) \ge 0$ for all $a \in [-C,C]^K$ and $b \in \R^K$. But, because $\xi(a,a) = 0$, the latter is implied by  
    \[
        \nabla_b \xi(a,b)^\top (b-a) \ge 0 \,.
    \]
    Substituting the definition~\eqref{eq:def_xi} of $\xi(a,b)$, this can be rewritten as
    \[ 
    (1 - \beta) (b-a)^\top  \nabla^2 f(b) (b-a) - \frac{\beta}{2} \nabla^3 f(b)[b-a,b-a,b-a] \ge 0 
    \]
    where for all $h \in \R^K$, 
    \begin{equation}
    \nabla^3f(b)[h,h,h] = \sum_{i,j,k} (\nabla^3f(b))_{i,j,k} h_i h_j h_k \,.
    \label{eq:nabla3hhh}
    \end{equation}
    Rearranging the terms gives the following condition
    \begin{equation}
    \label{eq:ineq_d3_d2}
        \nabla^3f(b)[b-a,b-a,b-a] \le 2\left(\frac{1}{\beta} - 1\right) \nabla^2f(b)[b-a,b-a] .
    \end{equation}
    where $\nabla^2f(b)[h,h] =  \sum_{i,j} (\nabla^2 f(b))_{i,j} h_i h_j$. 
    Let $p \in \Delta^K$ defined as $p_i = \frac{e^{b_i}}{\sum_{j=1}^K e^{b_j}}$. The first two derivatives of $f$ satisfy
    \[
        \nabla f(b) = p,\quad 
    \nabla^2 f(b) = \operatorname{diag}(p) - p p^\top .
    \]
    Then using that $\frac{\partial}{\partial b_j} p_i = \one{i=j} p_i - p_i p_j$ and chain rules of the derivative, the third derivative may be computed as follows
    \begin{align}
        (\nabla^3 f(b))_{i,j,k}
        &= \frac{\partial}{\partial b_k} \left( \one{i=j} p_i - p_i p_j \right) \nonumber \\
        &= \one{i=j} \frac{\partial p_i}{\partial b_k} - \frac{\partial p_i}{\partial b_k}p_j - p_i \frac{\partial p_j}{\partial b_k} \nonumber \\
        &= \one{i=j} (\one{i=k} p_i - p_i p_k) - (\one{i=k} p_i - p_i p_k) p_j - p_i (\one{j=k} p_j - p_j p_k) \nonumber \\
        &= \one{i=j=k} p_i - \one{i=j} p_i p_k - \one{i=k} p_i p_j - \one{j=k} p_i p_j + 2 p_i p_j p_k  \label{eq:nablaf} \,.
    \end{align}
    Let $X$ be a random variable which takes the values $b_i - a_i$ with probability $p_i$, for $i = 1,\dots,K$. Now, note that
    \begin{align*}
        \E[X^3] & = \sum_{i = 1}^K p_i (b_i - a_i)^3 = \sum_{i,j,k} \one{i = j = k} p_i(b_i - a_i)(b_j - a_j)(b_k - a_k) 
     \,, \\
        \E[X^2]\E[X] & = \left(\sum_{i=1}^K p_i (b_i - a_i)^2\right) \left(\sum_{j=1}^K p_k (b_k - a_k)\right) \\
            & = \left(\sum_{i,j} p_i (b_i - a_i)(b_j - a_j) \one{i=j}\right) \left(\sum_{j=1}^K p_k (b_k - a_k)\right) \\
            & = \sum_{i,j,k} \one{i=j} p_i p_k (b_i-a_i)(b_j-a_j)(b_k - a_k)  \,,  \\
        \E[X]^3 & = \sum_{i,j,k} p_i p_j p_k (b_i-a_i)(b_j-a_j)(b_k -a_k)\,.
    \end{align*}
    Therefore, summing Equation~\eqref{eq:nablaf} over $i,j,k$ and recognizing the above values of $\E[X^3]$, $\E[X^2]\E[X]$ and $\E[X]^3$,
    the term on the left-hand-side of Inequality~\eqref{eq:ineq_d3_d2} can be rewritten as
    \begin{align}
        \nabla^3f(b)[b-a,b-a,b-a] 
        & 
        \stackrel{\eqref{eq:nabla3hhh}}{=} \sum_{i,j,k} (\nabla^3 f(b))_{i,j,k} (b_i - a_i)(b_j - a_j)(b_k - a_k) \nonumber \nonumber \\
        & \stackrel{\eqref{eq:nablaf}}{=} \E[X^3] - 3 \E[X^2]\E[X] + 2\E[X]^3 \nonumber \\
        &= \E[(X - \E[X])^3].  \label{eq:d3_moment}
    \end{align}
    Similarly,
    \begin{equation}
    \label{eq:d2_moment}
        \nabla^2\ell(b)[b-a,b-a] = \E[X^2] - \E[X]^2  
        = \E\big[(X - E[X])^2\big] .   
    \end{equation}
    Substituting~\eqref{eq:d3_moment} and ~\eqref{eq:d2_moment} and replacing $\eta =(\log(K)/2 +C+1)^{-1}$ into Inequality~\eqref{eq:ineq_d3_d2}, the latter can be rewritten in the following way
    \begin{equation}
    \label{eq:moment_inequality}
    \E\big[(X - \E[X])^3\big] \le (2C + \log K ) \E\big[(X - \E[X])^2\big] .
    \end{equation}
    Recall that $X$ takes values $b_i - a_i$ with probability $p_i \propto e^{b_i}$ and that by assumption $\|a\|_\infty \leq C$. Almost surely, $X$ is upper-bounded as
    \[
        X \leq \max_{1\leq i \leq K} \{b_i - a_i\} \leq C + \max_{1\leq i\leq K} b_i
    \]
    and by Lemma~\ref{lem:softmax},
    \[
        \E[X] = \frac{\sum_{i=1}^K e^{b_i}(b_i-a_i)}{\sum_{j=1}^K e^{b_j}} \stackrel{(\|a\|_\infty \leq C)}{\geq} - C +  \frac{\sum_{i=1}^K e^{b_i}b_i}{\sum_{j=1}^K e^{b_j}} \stackrel{\text{(Lem.~\ref{lem:softmax}})}{\geq} -C - \log K + \max_{1\leq i \leq K} b_i \,.
    \]
    Hence, almost surely
    \[
        X-\E[X] \leq 2C + \log K\,,
    \]
    which implies Inequality~\eqref{eq:moment_inequality} and thus conclude the proof.
    \end{proof}
    
    \begin{lemma}
    \label{lem:softmax}
    For all $b \in \R^K$,
    \[ \frac{\sum_{i=1}^K e^{b_i}b_i}{\sum_{j=1}^K e^{b_j}} \ge \max_{1\le i\le K} b_i - \log(K) .\]
    \end{lemma}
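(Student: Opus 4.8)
The plan is to prove the inequality
\[
\frac{\sum_{i=1}^K e^{b_i} b_i}{\sum_{j=1}^K e^{b_j}} \ge \max_{1\le i\le K} b_i - \log(K)
\]
by reducing to a statement about a softmax-weighted average and then applying a convexity/entropy argument. First I would note that both sides are invariant under the shift $b_i \mapsto b_i - c$ for any constant $c$: the left-hand side is a weighted average of the $b_i$ with weights $p_i = e^{b_i}/\sum_j e^{b_j}$ that are themselves shift-invariant, and shifting changes the average by exactly $c$, which matches the shift of the right-hand side. Hence without loss of generality I may assume $\max_i b_i = 0$, so that $b_i \le 0$ for all $i$, and the goal becomes $\sum_i p_i b_i \ge -\log K$, i.e. $\sum_i p_i (-b_i) \le \log K$.

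Next I would identify the left-hand side with a relative-entropy (KL divergence) expression. With $p_i = e^{b_i}/Z$ where $Z = \sum_j e^{b_j}$, we have $\log p_i = b_i - \log Z$, so $\sum_i p_i b_i = \sum_i p_i \log p_i + \log Z = -H(p) + \log Z$, where $H(p) = -\sum_i p_i \log p_i$ is the Shannon entropy of the distribution $p$. Therefore the claim $\sum_i p_i b_i \ge \max_i b_i - \log K$ becomes $\log Z - H(p) \ge -\log K$ under the normalization $\max_i b_i = 0$. Since $\max_i b_i = 0$ forces $Z = \sum_j e^{b_j} \ge e^{0} = 1$, we get $\log Z \ge 0$, and since $H(p) \le \log K$ always (the entropy of any distribution on $K$ atoms is at most $\log K$, with equality for the uniform distribution), we obtain $\log Z - H(p) \ge 0 - \log K = -\log K$, which is exactly what is needed. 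Unwinding the normalization gives the general statement.

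An alternative, perhaps cleaner route that avoids even naming the entropy: after reducing to $\max_i b_i = 0$, write $\sum_i p_i b_i = \sum_i p_i \log(e^{b_i}) = \sum_i p_i \log(Z p_i) = \log Z + \sum_i p_i \log p_i$. Then bound $\sum_i p_i \log p_i \ge \sum_i p_i \log(1/K) = -\log K$ directly via Jensen's inequality applied to the concave function $\log$ (equivalently, via nonnegativity of the KL divergence between $p$ and the uniform distribution), and bound $\log Z \ge 0$ as above. Either way the proof is short. I do not anticipate a real obstacle here; the only mild subtlety is making the shift-invariance reduction explicit and being careful that the normalization $\max_i b_i = 0$ is what turns $Z \ge 1$ into $\log Z \ge 0$ — but this is routine. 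One could also present it without any normalization by directly writing $\sum_i p_i b_i - \max_k b_k = \sum_i p_i (b_i - \max_k b_k)$ and noting that each $b_i - \max_k b_k \le 0$, while $\sum_i e^{b_i - \max_k b_k} \ge 1$, then running the same entropy bound; I would pick whichever phrasing is most compact for the final writeup.
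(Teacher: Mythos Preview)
Your proof is correct and essentially the same as the paper's: both arrive at the intermediate bound $\sum_i p_i b_i \ge \log\!\big(\sum_j e^{b_j}\big) - \log K$ via Jensen's inequality for the concave logarithm (the paper applies Jensen directly to $-\sum_i p_i \log(e^{-b_i})$, while you phrase the same step as the entropy bound $H(p)\le\log K$), and then use $\log\!\big(\sum_j e^{b_j}\big) \ge \max_i b_i$. The paper's version is marginally more direct since it skips the shift-invariance normalization, but the mathematical content is identical.
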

    \begin{proof}
    Let $p \in \Delta_K$ defined as $p_i = \frac{e^{b_i}}{\sum_{j=1}^K e^{b_j}}$.
    We can write the left term as \[\sum_{i=1}^K p_i b_i = - \sum_{i=1}^K p_i \log(e^{-b_i}) .\]
    By concavity of the logarithm, it follows from Jensen inequality that
    \begin{align*}
        \sum_{i=1}^K p_i b_i &\ge -\log\left(\frac{K}{\sum_{i=1}^K e^{b_i}}\right) 
        = \log\left(\sum_{i=1}^K e^{b_i}\right) - \log(K) 
        \ge \max_{1 \le i \le K} b_i - \log(K) .
    \end{align*}
    \end{proof}
    
    
    The following lemma shows that the logistic loss satisfies Assumption~\ref{ass:selfconcordance}. The proof follows from generalized self-concordance.
    
    \begin{lemma}
    \label{lem:self_concordance_logistic}
    The logistic loss $\ell$ verifies for all $y \in [K]$, $x \in \R^d$ and $\theta_1, \theta_2 \in \R^{d}$,
    \[ \ell_{x,y}(\theta_1) - \ell_{x,y}(\theta_2) - \nabla \ell_{x,y}(\theta_2)(\theta_1 - \theta_2) \le e^{4R^2 \|\theta_1 - \theta_2\|^2} \|\theta_1 - \theta_2\|_{\nabla^2 \ell{x,y}(\theta_2)}^2 \,.
    \]
    \end{lemma}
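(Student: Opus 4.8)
The plan is to establish generalized self-concordance of the logistic loss and then integrate it along the segment $[\theta_2,\theta_1]$. Write $\ell_{x,y}(\theta) = \psi_y(\Phi(x)^\top\theta)$ with $\psi_y(z) = -z_y + f(z)$ and $f(z) = \log\sum_j e^{z_j}$; since the linear term $-z_y$ contributes nothing to second and third derivatives, $\nabla^2\ell_{x,y}$ and $\nabla^3\ell_{x,y}$ are $\nabla^2 f$ and $\nabla^3 f$ composed with the linear map $\theta\mapsto\Phi(x)^\top\theta$. Fix $h := \theta_1-\theta_2$ and set $\tilde h := \Phi(x)^\top h\in\R^K$, so that $|\tilde h_i| = |\Phi(x)_{\cdot,i}^\top h|\le R\|h\|$ by the feature bound $\|\Phi(x)_{\cdot,i}\|\le R$, i.e. $\|\tilde h\|_\infty\le R\|h\|$. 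This sup-norm control is exactly the step that produces the constant $4R^2$.

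First I would record the self-concordance bound for $f$. Reusing the computation in the proof of Lemma~\ref{lem:lower_bound} (identities \eqref{eq:d3_moment} and \eqref{eq:d2_moment}): for $p = \sigma(z)$ and $X$ the random variable taking value $\tilde h_i$ with probability $p_i$, one has $\nabla^2 f(z)[\tilde h,\tilde h] = \mathrm{Var}(X)$ and $\nabla^3 f(z)[\tilde h,\tilde h,\tilde h] = \E[(X-\E X)^3]$. Since $X$ lies between $\min_i\tilde h_i$ and $\max_i\tilde h_i$, almost surely $|X-\E X|\le 2\|\tilde h\|_\infty$, hence $|\E[(X-\E X)^3]|\le 2\|\tilde h\|_\infty\,\mathrm{Var}(X)$. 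Translating back through the linear map and using $\|\tilde h\|_\infty\le R\|h\|$ gives the key inequality
\[ \big|\nabla^3\ell_{x,y}(\theta)[h,h,h]\big| \;\le\; 2R\|h\|\;\nabla^2\ell_{x,y}(\theta)[h,h] \qquad\text{for all }\theta\in\R^d. \]

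Next I would integrate along the segment. Let $\phi(s) := \ell_{x,y}(\theta_2+sh)$ for $s\in[0,1]$, so $\phi''(s) = \nabla^2\ell_{x,y}(\theta_2+sh)[h,h]\ge 0$ and $|\phi'''(s)|\le c\,\phi''(s)$ with $c := 2R\|h\|$. If $\phi''(0)=0$ then $\tilde h$ is constant across coordinates and $\phi$ is affine, so both sides of the lemma vanish; otherwise $\phi''>0$ on $[0,1]$, $(\log\phi'')'\le c$, and $\phi''(s)\le \phi''(0)e^{cs}$. Taylor's formula with integral remainder then yields
\[ \ell_{x,y}(\theta_1) - \ell_{x,y}(\theta_2) - \nabla\ell_{x,y}(\theta_2)^\top h \;=\; \int_0^1 (1-s)\,\phi''(s)\,ds \;\le\; \phi''(0)\,\frac{e^{c}-1-c}{c^2}. \]
I would close with two elementary one-variable inequalities: $e^{c}-1-c\le \tfrac12 c^2 e^{c}$ (compare Taylor coefficients, using $k(k-1)\ge 2$ for $k\ge2$) and $\tfrac12 e^{c}\le e^{c^2}$ (equivalent to $c^2-c+\log 2\ge0$, which holds for every $c$ because the discriminant $1-4\log 2<0$). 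Hence the remainder is at most $\phi''(0)\,e^{c^2} = e^{4R^2\|h\|^2}\,(\theta_1-\theta_2)^\top\nabla^2\ell_{x,y}(\theta_2)(\theta_1-\theta_2)$, which is the claim with $\zeta = 4R^2$.

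The main obstacle is bookkeeping rather than depth: one must carefully pass from the sup-norm bound on $\tilde h$ to the Hessian-weighted quantities (this is where the feature assumption and the precise constant enter) and verify the two scalar inequalities for $(e^c-1-c)/c^2$; the probabilistic identities for $\nabla^2 f$ and $\nabla^3 f$ are already available from Lemma~\ref{lem:lower_bound}.
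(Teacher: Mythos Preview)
Your argument is correct and follows essentially the same route as the paper's proof: establish generalized self-concordance of the logistic loss with constant $2R$, integrate along the segment to obtain the factor $\psi(c)=(e^{c}-1-c)/c^{2}$ with $c=2R\|\theta_1-\theta_2\|$, and then use $\psi(c)\le e^{c^{2}}$. The only difference is that the paper outsources both steps to \cite{marteau2019beyond} (their Example~2 and Proposition~4), whereas you give a self-contained derivation by reusing the moment identities \eqref{eq:d3_moment}--\eqref{eq:d2_moment} from Lemma~\ref{lem:lower_bound} and carrying out the Gr\"onwall-type integration explicitly; your two scalar inequalities $e^{c}-1-c\le \tfrac12 c^{2}e^{c}$ and $\tfrac12 e^{c}\le e^{c^{2}}$ together reproduce exactly the paper's claim $\psi(t)\le e^{t^{2}}$.
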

    \begin{proof}
    By example 2 of \cite{marteau2019beyond}, the logistic loss is generalized self-concordant with coefficient $2R$. By equation (30) of proposition 4 of the same paper, using $\lambda = 0$ and $\mu = \delta_{(x,y)}$ we have
    \[ \ell_{x,y}(\theta_1) - \ell_{x,y}(\theta_2) - \nabla \ell_{x,y}(\theta_2)(\theta_1 - \theta_2) \le \psi(\|\theta_1 - \theta_2\|) \|\theta_1 - \theta_2\|_{\nabla^2 \ell_{x,y}(\theta_2)}^2 \]
    with $\psi(t) = (e^t - 1 - t)/t^2$.
    Using that $\psi(t) \le e^{t^2}$ for $t \ge 0$ concludes the proof.
    \end{proof}
    
    The following lemma is a classical technical result of linear algebra.
    \begin{lemma}
    \label{lem:upper_bound_trace}
    Let $d \in \mathbb{N}$ and $A,B \in \mathbb{S}_+(\R^d)$ such that $A > B$, then
    \[ \operatorname{Tr}(A^{-1} B) \le \log \left(\frac{|A|}{|A-B|}\right) \,.
    \]
    \end{lemma}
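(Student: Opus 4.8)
The plan is to diagonalize the problem by a congruence transformation. Since $A > 0$ it has a positive definite square root, so set $M := A^{-1/2} B A^{-1/2}$. Because $0 \le B < A$, conjugation by $A^{-1/2}$ preserves the Loewner order, so $0 \le M < I$; in particular every eigenvalue $\mu_1,\dots,\mu_d$ of $M$ lies in $[0,1)$.

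Next I would rewrite both sides of the claimed inequality in terms of the $\mu_i$. For the left-hand side, cyclicity of the trace gives $\operatorname{Tr}(A^{-1}B) = \operatorname{Tr}(A^{-1/2} B A^{-1/2}) = \operatorname{Tr}(M) = \sum_{i=1}^d \mu_i$. For the right-hand side, writing $A - B = A^{1/2}(I - M)A^{1/2}$ yields $|A - B| = |A| \prod_{i=1}^d (1 - \mu_i)$, hence
\[
\log\left(\frac{|A|}{|A-B|}\right) = -\sum_{i=1}^d \log(1 - \mu_i),
\]
which is finite precisely because each $\mu_i < 1$. It then remains to compare the two sums termwise, using the elementary scalar bound $x \le -\log(1-x)$ valid for all $x \in [0,1)$ (equivalently $1 - x \le e^{-x}$), applied with $x = \mu_i$ and summed over $i$.

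There is essentially no obstacle here; the argument is routine linear algebra plus a one-line convexity inequality. The only point deserving a little care is the strict hypothesis $A > B$ (rather than $A \ge B$): it is exactly what guarantees $\mu_i < 1$, so that $|A-B| \neq 0$ and the logarithm on the right-hand side is well defined and finite. If only $A \ge B$ were assumed, the right-hand side could be $+\infty$ and the statement would degenerate.
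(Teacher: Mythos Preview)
Your proof is correct and essentially the same as the paper's. The paper applies the concavity of $X \mapsto \log|X|$ in one line to get $\log(|A|/|A-B|) \ge \operatorname{Tr}\big(I - (A-B)A^{-1}\big) = \operatorname{Tr}(A^{-1}B)$; your diagonalization via $M = A^{-1/2}BA^{-1/2}$ together with the scalar bound $\mu_i \le -\log(1-\mu_i)$ is exactly the eigenvalue-by-eigenvalue unpacking of that same concavity inequality.
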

    \begin{proof}
    Using the concavity of $X \mapsto \log|X|$, we have
    \begin{align*}
        \log \left(\frac{|A|}{|A-B|}\right)
        &= \log |A(A-B)^{-1}| 
        \ge \operatorname{Tr}(I - (A-B) A^{-1})
        = \operatorname{Tr}(A^{-1} B) .
    \end{align*}
    \end{proof}
\end{document}